\newcommand{\removelatexerror}{\let\@latex@error\@gobble}
\definecolor{mygray}{gray}{.9}
\newcommand*\circled[1]{\tikz[baseline=(char.base)]{
            \node[shape=circle,draw,inner sep=0.2pt] (char) {#1};}}
\let\oldnl\nl
\newcommand{\nonl}{\renewcommand{\nl}{\let\nl\oldnl}}
\newtheorem{definition}{Definition}[section]
\newtheorem{theorem}{Theorem}
\def\BibTeX{{\rm B\kern-.05em{\sc i\kern-.025em b}\kern-.08em
    T\kern-.1667em\lower.7ex\hbox{E}\kern-.125emX}}
\begin{document}

\title{Upcycling Noise for Federated Unlearning\\
}

\author{Jianan Chen, Qin Hu, Fangtian Zhong, Yan Zhuang, Minghui Xu}
\maketitle

\begin{abstract}
In Federated Learning (FL), 
multiple clients collaboratively train a model without sharing raw data. This paradigm can be further enhanced by Differential Privacy (DP) to protect local data from information inference attacks and is thus termed DPFL. An emerging privacy requirement, ``the right to be forgotten'' for clients, poses new challenges to DPFL but remains largely unexplored. 
Despite numerous studies on federated unlearning (FU), they are inapplicable to DPFL because the noise introduced by the DP mechanism compromises their effectiveness and efficiency. In this paper, we propose \textit{Federated Unlearning with Indistinguishability (FUI)} to unlearn the local data of a target client in DPFL for the first time. FUI consists of two main steps: \textit{local model retraction} and \textit{global noise calibration}, resulting in an unlearning model that is statistically indistinguishable from the retrained model. Specifically, we demonstrate that the noise added in DPFL can endow the unlearning model with a certain level of indistinguishability after local model retraction, and then fortify the degree of unlearning through global noise calibration. 
Additionally, for the efficient and consistent implementation of the proposed FUI, we formulate a two-stage Stackelberg game to derive optimal unlearning strategies for both the server and the target client.
Privacy and convergence analyses confirm theoretical guarantees, while experimental results based on four real-world datasets illustrate that our proposed FUI achieves superior model performance and higher efficiency compared to mainstream FU schemes. Simulation results further verify the optimality of the derived unlearning strategies.
\end{abstract}

\section{Introduction}
Federated Learning (FL) is a distributed machine learning framework that facilitates the collaborative training of models among multiple clients, without sharing their local data, under the coordination of a centralized server \cite{mcmahan2017communication}. While this distributed approach inherently guarantees a certain degree of privacy by uploading the local models instead of the original data, the risk of potential privacy breaches still exists as sensitive information can be inferred from the shared models~\cite{geiping2020inverting,luo2021feature,li2022auditing}. To mitigate this risk, Differential Privacy (DP) is integrated into FL to provide a theoretical privacy guarantee by adding noise to local models so that they would not leak information about training samples, which can be termed DP-enhanced FL (DPFL) \cite{geyer2017differentially,wei2020federated}. 

Despite the robust privacy protection provided by DPFL, there is an emerging privacy requirement concerning the case of any client who would like to quit the FL process. This issue is technically known as federated unlearning (FU) to guarantee ``the right to be forgotten'' \cite{regulation2016regulation} for clients. 
Prominent regulations, such as the European Union’s General Data Protection Regulation (GDPR) \cite{voigt2017eu}, 
also mandate the deletion of personal data from models upon request. While DPFL effectively reduces the risk of privacy leakage by adding noise to local models, it cannot inherently facilitate the unlearning of local data. The challenges of unlearning in DPFL have never been touched, presenting a significant gap in current research.

According to whether remaining clients are required to actively participate in unlearning, existing research on FU could be divided into two categories: public resource consuming schemes~\cite{liu2021federaser,zhao2023federated,yuan2023federated,liu2022right,halimi2022federated,cao2023fedrecover, ding2023incentive} and stakeholder engaging schemes~\cite{wu2022federated,wang2022federated,xia2023fedme,zhang2023fedrecovery,alam2023get,gong2021bayesian}. For the first category of schemes, the computational and time costs are usually significant, and the remaining clients lack the motivation to participate in the unlearning process. While for stakeholder engaging schemes, the noise introduced in DPFL would compromise their effectiveness and efficiency. In other words, the noise of the DP mechanism leads to significant performance degradation of the unlearning model. Due to the above reasons, all existing FU methods cannot be directly applied to DPFL. 

In this work, we propose an unlearning scheme, \textit{Federated Unlearning with Indistinguishability (FUI)}, tailored for DPFL by upcycling the noise added into local models for partial unlearning and then deriving a statistically indistinguishable global model compared to the retrained model for full unlearning. Specifically,
FUI consists of two steps: \textit{local model retraction} and \textit{global noise calibration}. In local model retraction, the target client reverses the FL process by maximizing the loss function based on the Limited-memory Broyden–Fletcher–Goldfarb–Shanno (L-BFGS) algorithm \cite{liu1989limited} to improve the efficiency of unlearning, where the noise injected in DPFL turns out to achieve a certain degree of unlearning. In global noise calibration, the unlearning result is investigated to determine whether additional noise is needed to achieve $\epsilon$-indistinguishability.
FUI only requires the participation of the server and the target client requesting to be forgotten, minimizing the overall resource consumption 
and improving the time efficiency. 
To facilitate the implementation of FUI, we also resolve the interest conflict between the server and the target client via formulating a Stackelberg game with the server as the leader and the client as the follower. Finally, the optimal unlearning strategies, including both the penalty factor $p$ for the server and the privacy requirement $\epsilon$ for the target client, are derived through backward induction.

The contributions of this paper are manifold:
\begin{itemize}
\item We are the first to study the issue of federated unlearning in DPFL. The proposed FUI approach only requires the participation of the server and the target client but not other clients staying in DPFL, making the DPFL system energy-efficient and sustainable. 
\item Our scheme delicately upcycles the noise injected by the DP mechanism to achieve a certain degree of unlearning by conducting local model retraction at the target client.
\item We use the concept of indistinguishability to theoretically quantify the degree of unlearning and enhance the indistinguishability of the unlearning model through global noise calibration, calculating the noise gap and injecting additional noise when necessary.
\item We formulate a two-stage Stackelberg game to derive optimal unlearning strategies toward the efficient and consistent implementation of the proposed FUI.
\item We theoretically prove that our proposed FUI can effectively generate an unlearning model, which is $\epsilon$-indistinguishable from the retrained model. The FUI algorithm is also theoretically convergent.\item Experimental results demonstrate that our proposed FUI scheme can achieve a balance between the performance of the unlearning model and the privacy protection for clients in an efficient manner. Meanwhile, the derived optimal unlearning strategies realize the stably high utilities for both the server and the target client.
\end{itemize}

The rest of the paper is organized as follows. Section \ref{sec:related} reviews the related work in the fields of DPFL and federated unlearning. Section \ref{sec:system} outlines the system model of DPFL and Section \ref{sec:fui} delves into the detailed design of FUI. Section~\ref{sec:game} presents the Stackelberg game for implementing FUI. 
Section \ref{sec:theoretical} discusses the theoretical analysis of our approach. Section \ref{sec:experimental} illustrates the experimental setup and analyzes the results, while Section \ref{sec:conclusion} concludes the whole paper with the discussion of future directions.

\section{Related Work}\label{sec:related}
Researchers have extensively explored the integration of DP and FL. 
McMahan et al. formally define DP-based FL with theoretical privacy guarantees as a pioneering study \cite{mcmahan2017learning}. Truex et al. develop the LDP-Fed system with selection and filtering algorithms for perturbing and sharing partial parameter updates with the server \cite{truex2020ldp}. Girgis et al. achieve a better balance between communication efficiency and local DP by deploying the CLDP-SGD algorithm \cite{girgis2021shuffled}. 
The application of DPFL has extended to many practical and cutting-edge areas, including the Internet of Vehicles \cite{olowononi2021federated,zhao2020local}, medical information~\cite{choudhury2019differential,adnan2022federated}, and quantum platforms \cite{rofougaran2024federated}. 

To guarantee ``the right to be forgotten'' \cite{regulation2016regulation} for participants, FU in vanilla FL has received considerable attention recently. The state-of-the-art FU schemes can be divided into two categories, \textit{public resource consuming} and \textit{stakeholder engaging} schemes, according to whether other staying clients besides the target client need to be actively involved in the unlearning process or not. 
In the first category of studies, Liu et al present FedEraser, as the first FU algorithm, by trading the server’s storage for the remaining clients' time to reconstruct the unlearning model \cite{liu2021federaser}. Halimi et al. obtain the final unlearning result by performing local unlearning on the target client and then using the model to initialize the retrain at all clients \cite{halimi2022federated}. To support unlearning in federated recommendation systems, Yuan et al. propose Federated Recommendation Unlearning (FRU) inspired by the transaction rollback mechanism, which removes user's contributions by rolling back and calibrating historical updates, and then accelerates retraining \cite{yuan2023federated}. Ding et al. propose a four-stage game to formulate the interaction in the learning and unlearning process, guiding the incentive design. The proposed incentive mechanism allows the server to retain valuable clients with less cost \cite{ding2023incentive}. Cao et al. apply the unlearning technique against poisoning attacks by proposing FedRecover, which uses the historical information and the computation of the remaining clients to restore the attacked model \cite{cao2023fedrecover}. However, since these methods require the involvement of remaining clients, the computational and time costs are significant, and the remaining clients are less motivated to join this process.

The other category only requires the participation of the target client working with the centralized server. 
Wu et al. propose a novel FU approach that eliminates clients' contributions by subtracting accumulated historical update information and utilizes knowledge distillation to recover model \cite{wu2022federated}. Wang et al. introduce the concept of Term Frequency Inverse Document Frequency (TF-IDF) to quantize the class discrimination of channel and unlearning information of specific class in the model by pruning and fine-tuning \cite{wang2022federated}. To solve the problem of mobile network instability in digital twins, Xia et al. propose the framework of FedME2, which achieves more accurate unlearning through a memory evaluation module and multi-loss training approach \cite{xia2023fedme}. Zhang et al. propose FedRecovery, which realizes FU under the definition of indistinguishability by injecting noise into the unlearning model \cite{zhang2023fedrecovery}. Unfortunately, none of the above methods can be directly applied to DPFL because the noise introduced by the DP mechanism in the learning process can prominently sacrifice their advantages in unlearning, including the performance of the unlearning model, as well as the effectiveness and efficiency of unlearning. 

Driven by this challenge, we propose to leverage the inherent characteristics of the DP mechanism by recycling the noise in the DPFL process to facilitate the unlearning of a target client. A follow-up procedure quantifies the degree of unlearning to determine the amount of additional noise regarding achieving the full unlearning goal with a theoretical privacy guarantee.  

\section{System Model}\label{sec:system}



As mentioned in the Introduction, although FL keeps clients' raw data locally, it is still vulnerable to inference attacks on uploaded local models. DP, therefore, is employed to fortify FL by injecting noise into local models, theoretically ensuring a certain level of privacy protection for clients' data. This fusion of DP with FL, termed DPFL, empowers clients to engage in FL without worrying about data privacy, thereby expanding the reach of FL in sectors where confidentiality is paramount. 

\begin{figure}
\centering
\includegraphics[scale=0.82]{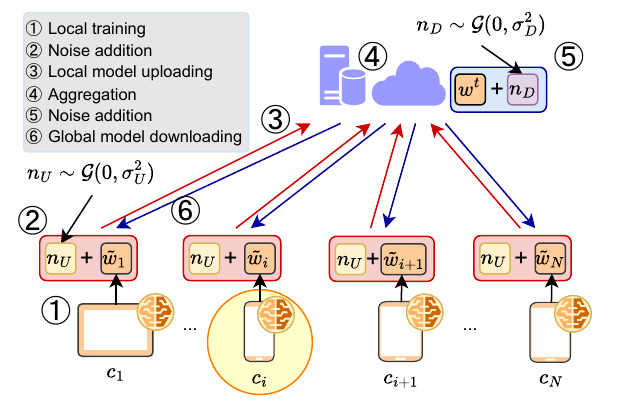}
\caption{Illustration of DPFL.}
\label{fig:FLDP}
\end{figure}

The framework\footnote{Since there are various implementations of DPFL, we consider a widely accepted one defined by \cite{wei2020federated} in this paper.} of DPFL is demonstrated in Fig. \ref{fig:FLDP}, 
where a group of clients $\mathcal{C}=\{c_1,\dots,c_i,\dots,c_N\}$ collaboratively train a model under the coordination of a cloud server. 
The main procedures in DPFL can be summarized below: 
\begin{enumerate}[label=\protect\circled{\arabic*}]
\item After the server initializes the global model, each client $c_i\in\mathcal{C}$ downloads it for conducting local model training using its own dataset $\mathcal{D}_i=\{(x_j,y_j)\}^{|\mathcal{D}i|}_{j=1}$, where $x_j$ is the $j$-th input sample, $y_j$ is the corresponding label, and $|\mathcal{D}_i|$ is the number of training samples in $\mathcal{D}_i$.
\item Each client $c_i$ aims to find a vector $\tilde{w}_i$, i.e., the local model, to minimize the loss function $F_i(\tilde{w}_i)$, where $\tilde{w}_i$ will be injected with Gaussian noise $n_U\sim\mathcal{G}(0,\sigma_{U}^2)$ to protect the sensitive information from being inferred, with $\sigma_U$ indicating the scale of the noise.
\item Then client $c_i$ submits the protected model $w_i=\tilde{w}_i+n_U$ to the server.
\item In $t$-th global round, the server aggregates all submissions from clients to update the global model $w^t=\sum_{i=1}^{N}\frac{|\mathcal{D}_i|}{\sum_{i=1}^{N} |\mathcal{D}_i|} w_i^t$. 
\item The server further protects this aggregated model by adding noise $n_D\sim\mathcal{G}(0,\sigma_{D}^2)$ with $\sigma_{D}$ being the noise scale.
\item The server broadcasts the protected aggregated model $w^t+n_D$ to each client for the next round of training.
\end{enumerate}
After sufficient iterations, the global model converges to 
$w^*=\arg \min_{w} \sum_{i=1}^{N}\frac{|\mathcal{D}_i|}{\sum_{i=1}^{N} |\mathcal{D}_i|} F_i(w)$. 

In this process, steps \circled{2} and \circled{5} eliminate the privacy leakage risk of local data by adding noise to realize the privacy protection requirement defined below: 
\begin{definition}
($\eta$-DPFL). An FL mechanism $\mathcal{M}:\mathcal{X} \xrightarrow{}\mathcal{R}$, with domain $\mathcal{X}$ and range $\mathcal{R}$, satisfies $\eta$-DP for each client $c_i$ if for any pair of neighboring datasets $\mathcal{D},\mathcal{D}'\in\mathcal{X}$, and for all sets $O\subseteq\mathcal{R}$ of possible models, we have:
\begin{equation*}
    Pr[\mathcal{M}(\mathcal{D})\in O]\leq e^{\eta}\cdot Pr[\mathcal{M}(\mathcal{D}')\in O],
\end{equation*}
where $\eta>0$ is the privacy parameter and usually small.
\end{definition}
Specifically, $\eta$-DP can be achieved by adding Gaussian noise $n_U\sim\mathcal{G}(0,\sigma_{U}^2)$ 
to the local updates for the uplink channel and $n_D\sim\mathcal{G}(0,\sigma_{D}^2)$ to the global model for the downlink channel~\cite{wei2020federated}.  
To ensure that the given noise distribution $n_U\sim\mathcal{G}(0,\sigma_{U}^2)$ preserves $\eta$-DP in uplink channel, the noise scale $\sigma_U= \frac{\Delta s_U}{\eta}$, where $\Delta s_U=\frac{2C}{m}$, with $C$ being the clipping threshold for bounding local model $w_i$ and $m=\min\{|\mathcal{D}_i|\}$. As for the downlink channel, extra noise $n_{D}\sim \mathcal{G}(0,{\sigma_D}^2)$ would be added to the aggregated model $w$ by the server so as to reduce the chance of information leakage due to the large number of model aggregations. And thus $\sigma_D$ is calculated by:
\begin{align*}
    \sigma_D=\begin{cases}
         \frac{2C(T^2-L^2N)}{mN\eta}& T> L\sqrt{N}, \\
         0 & T\leq L\sqrt{N}.
    \end{cases}
\end{align*}
$T$ is the number of aggregation times, $L$ is the number of exposures of local models, and $N$ is the total number of clients. 

In DPFL, a target client, denoted as $c_i$ (highlighted with a yellow circle in Fig. \ref{fig:FLDP}), is granted the authority for security or privacy considerations to request withdrawing the contribution of its local data, which is known as \textit{federated unlearning (FU)}. This withdrawal requires the server to eliminate $\mathcal{D}_i$'s influence on the global model, thereby upholding the ``right to be forgotten'' for each client. While a straightforward approach of FU involves all remaining clients retraining their local models to obtain a new global model $w^{RE}$, this method is resource-intensive and time-consuming. Consequently, the server is tasked with identifying an alternative algorithm capable of deriving an unlearning model $w^{UN}$, which could match the performance of $w^{RE}$ without compromising the legitimate right of the target client while meeting $\eta$-DP requirement. 
In spite of various state-of-the-art FU schemes, none of them resolves this challenge of unlearning in DPFL.

\section{Federated Unlearning with Indistinguishability (FUI)}\label{sec:fui}
FU is generally challenging. As for DPFL, the addition of DP noise makes the performance loss of the existing FU schemes even greater. DPFL urgently requires an unlearning solution that can embrace noise. Thus, we propose FUI as an unlearning scheme tailored for DPFL. Unlike existing FU schemes, FUI exploits the natural properties of DP to implement the unlearning process in DPFL for obtaining an unlearning model $w^{UN}$ with a similar performance as the retrained model $w^{RE}$. Technically, we can resort to the concept of $\epsilon$-indistinguishability to characterize the similarity between $w^{UN}$ and $w^{RE}$. 

\begin{definition}
($\epsilon$-indistinguishability). For random variables $X$ and $Y$ over domain $\mathcal{X}$, $X$ and $Y$ are $\epsilon$-indistinguishable if for all possible subsets of outcomes $\mathcal{Z}\subseteq \mathcal{X}$, there exist 
\begin{align}\label{eq:def_indis}
    Pr[X\in \mathcal{Z}]\leq e^{\epsilon}\cdot Pr[Y\in \mathcal{Z}],\nonumber\\
    Pr[Y\in \mathcal{Z}]\leq e^{\epsilon}\cdot Pr[X\in \mathcal{Z}],
\end{align}
where 
$\epsilon>0$ is a small real number.
\end{definition}
This definition provides a statistical guarantee that two random variables are indistinguishable under this criterion. Then the similarity between the unlearning model and the retrained model can be quantified by this statistical indistinguishability. 
Now we could reformulate the FU task in DPFL as below:
\begin{definition}
(FU in $\eta$-DPFL). In $\eta$-DPFL, for the unlearning request of a target client $c_i$, the server needs to remove the influence of $\mathcal{D}_i$ on the global model $w$ by finding an unlearning model $w^{UN}$ which is $\epsilon$-indistinguishable with the retrained model $w^{RE}$ while meeting the $\eta$-DP requirement. 
\end{definition}

Then we propose FUI to derive the desired unlearning model by two steps: \textit{local model retraction} and \textit{global noise calibration}. Local model retraction allows the client to autonomously forget the knowledge of the local data and obtain a preliminary unlearning model $w^{LR}$, and global noise calibration further calculates the necessary noise to be added to $w^{LR}$ to derive $w^{UN}$ which satisfies $\epsilon$-indistinguishability with $w^{RE}$.

\subsection{Local Model Retraction}
To unlearn local data of $c_i$, an intuitive idea is to reverse the local learning process, that is, training the model parameters to maximize the loss function instead of minimizing it. While previous studies have employed various methods, such as gradient ascent and projected gradient ascent, to achieve this reversal, they are not compatible with DP, potentially compromising their effectiveness and performance due to noise interference. Moreover, these methods are usually time-consuming, thus reducing the efficiency of unlearning.

In our proposed FUI, we leverage the Limited-memory Broyden–Fletcher–Goldfarb–Shanno (L-BFGS) algorithm \cite{liu1989limited} which optimizes functions more efficiently than traditional gradient ascent methods, particularly in scenarios involving numerous variables. L-BFGS approximates the inverse Hessian matrix using limited memory, which facilitates fast convergence and reduces the requirement for storing extensive data. It can also handle high-dimensional problems efficiently and adjust step sizes automatically, enhancing its utility for large-scale optimization tasks and thus being suitable for FU.

In particular, target client $c_i$ computes a reference model $w_{ref}=\frac{1}{N-1}(Nw^t-w_i^{t})$, where $w^t$ is the global model after $t$ rounds of aggregation and $w_i^{t}$ denotes the submission of $c_i$ in the $t$-th round. Then, $c_i$ applies L-BFGS algorithm to solve an optimization problem:
\begin{equation*}
  w^{LR}=\arg \max_{w\in\{v\in\mathcal{R}:||v-w_{ref}||_2\leq \delta\}} F_i(w),
\end{equation*}
where $\delta$ is the radius of the $l_2$-norm ball,  determining the range of retraction.
Specifically, for a given step size $\alpha>0$ and the approximation of the inverse Hessian matrix $\mathbf{H}$, client $c_i$ iterates $w$ until convergence:
\begin{align*}
&w_0 =w_{ref},\\
&\mathbf{H}_0=\lambda\mathbf{I},\\
&
\begin{cases}
S_{k}=w_{k+1}-w_{k},\\
y_{k}=F_i(w_{k+1})-F_i(w_{k}),\\
\mathbf{H}_{k+1}=(\mathbf{I}-\frac{S_{k}y_{k}^T}{y_{k}^T S_{k}})\mathbf{H}_{k}(\mathbf{I}-\frac{y_{k}S_{k}^T}{y_{k}^T S_{k}})+\frac{S_{k}S_{k}^T}{y_{k}^T S_{k}},\\
w_{k+1}=w_k+\alpha \mathbf{H}_k \nabla F_i(w_k),
\end{cases}
\end{align*}
where $\mathbf{H}_k$ is the approximation to the inverse Hessian matrix at iteration $k$, $\lambda$ is a scalar parameter commonly set as $1$, and $\nabla F_i$ is the gradient of $F_i$. 

Once the server receives $w^{LR}$ from $c_i$, it can be found that local model retraction naturally achieves a certain level of indistinguishability because of the noise added by the DP mechanism, which is summarized in the following theorem. 
\begin{theorem}\label{Theo:1}
If a randomized mechanism $\mathcal{M}$ satisfies $\eta$-DP, then the outgoing random variables $X$ and $Y$ of $\mathcal{M}$ are $\frac{\eta^2}{2}$-indistinguishable.
\end{theorem}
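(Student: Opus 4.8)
The plan is to pass from the multiplicative $\eta$-DP guarantee to a bound on the privacy loss, and then to show that its expectation---the Kullback--Leibler divergence between the two output distributions---is at most $\frac{\eta^2}{2}$, which is exactly the indistinguishability exponent claimed.

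First I would introduce the privacy loss random variable $L(z)=\log\frac{\Pr[X=z]}{\Pr[Y=z]}$, where $X=\mathcal{M}(\mathcal{D})$ and $Y=\mathcal{M}(\mathcal{D}')$ for a pair of neighboring datasets. The $\eta$-DP hypothesis is precisely the pointwise bound $|L(z)|\le\eta$, and because the neighboring relation is symmetric the roles of $X$ and $Y$ may be swapped, which will later supply both directions of the indistinguishability inequality.

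Second I would bound $\mathbb{E}_{z\sim X}[L(z)]=D_{\mathrm{KL}}(X\|Y)$. The normalization identity $\mathbb{E}_{z\sim X}[e^{-L(z)}]=\int\Pr[Y=z]\,dz=1$, together with $|L|\le\eta$, reduces the task to maximizing $\mathbb{E}[L]$ over random variables supported in $[-\eta,\eta]$ subject to $\mathbb{E}[e^{-L}]=1$. Since the objective is linear in the law of $L$, the extremal distribution is the two-point law on $\{-\eta,+\eta\}$, and evaluating there yields $D_{\mathrm{KL}}(X\|Y)\le\eta\tanh(\eta/2)\le\frac{\eta^2}{2}$, the last inequality being $\tanh(x)\le x$. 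For the Gaussian mechanism actually deployed in the paper this is attained exactly: two equal-variance Gaussians whose means differ by the sensitivity $\Delta s_U$, with $\sigma_U=\Delta s_U/\eta$, satisfy $D_{\mathrm{KL}}=\frac{(\Delta s_U)^2}{2\sigma_U^2}=\frac{\eta^2}{2}$.

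Finally I would convert the two symmetric divergence bounds into the two-sided probability statement required by the definition of $\epsilon$-indistinguishability. This conversion is the step I expect to be the main obstacle: the raw likelihood ratio of a Gaussian mechanism is unbounded, so $e^{\eta^2/2}$ cannot dominate $\Pr[X\in\mathcal{Z}]/\Pr[Y\in\mathcal{Z}]$ uniformly over every $\mathcal{Z}$, and a strict worst-case reading of the definition would recover only the weaker exponent $\eta$. I would therefore argue at the level of the expected (typical) privacy loss rather than the worst case, identifying $\frac{\eta^2}{2}$ as the effective indistinguishability exponent governing $\mathbb{E}[L]$; making this expected-loss reading precise---equivalently, justifying that the KL-based quantity is the operative notion of $\epsilon$-indistinguishability in our framework---is the delicate part, whereas the extremal computation of the preceding step is routine once the normalization identity is in hand.
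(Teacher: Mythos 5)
Your divergence computation is correct, and it is in fact the order-one endpoint of exactly the fact the paper invokes: the paper's proof goes through R\'enyi divergence and the concentrated-DP bound of Bun and Steinke, while you work with the privacy-loss variable $L$ directly; your extremal two-point argument giving $D_{\mathrm{KL}}(X\|Y)\le\eta\tanh(\eta/2)\le\frac{\eta^2}{2}$ is essentially the proof of the Bun--Steinke proposition that pure $\eta$-DP implies $\frac{1}{2}\eta^2$-zCDP, and it is even sharper than what you need at that stage. The problem is the last step. Theorem \ref{Theo:1}, read against the paper's Definition of $\epsilon$-indistinguishability in \eqref{eq:def_indis}, is a \emph{worst-case} statement: $\Pr[X\in\mathcal{Z}]\le e^{\eta^2/2}\Pr[Y\in\mathcal{Z}]$ for \emph{every} outcome set $\mathcal{Z}$, in both directions. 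A bound on the expected privacy loss does not imply this, and your proposal concedes the point: re-reading the definition as an expected-loss (KL) notion proves a different statement, not the theorem as written. So as a proof of Theorem \ref{Theo:1} the proposal has a genuine gap, and it is exactly the step you yourself flagged as the obstruction.

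It is worth seeing how the paper negotiates that step, because it does not stay at the KL endpoint where you stop. It fixes the R\'enyi order at $\alpha-1=\eta$ to get the moment bound $\sum_{x\in O}\left(\frac{\Pr[Y=x]}{\Pr[Y'=x]}\right)^{\eta}\le e^{\eta^2}$, and then converts this into the two-sided event bound by inserting and rearranging ratio factors in $\Pr[Y\in O]=\sum_{x\in O}\Pr[Y=x](\cdot)^{\eta/2}(\cdot)^{\eta/2}$, asserting that each factor contributes $e^{\eta^2/4}$. That conversion is precisely the moment-to-worst-case passage you identified as problematic, and your skepticism is well founded: under pure $\eta$-DP the likelihood ratio can equal $e^{\eta}$ on a positive-probability outcome (randomized response saturates it), so for $\eta<2$, where $e^{\eta^2/2}<e^{\eta}$, no argument from the hypothesis alone can deliver the uniform worst-case exponent $\frac{\eta^2}{2}$; one must either restrict the mechanism class, accept an approximate guarantee with a $\delta$ slack (as in the standard zCDP-to-$(\epsilon,\delta)$-DP conversion), or perform the paper's formal splitting step, in which all of the difficulty is concentrated. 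One peripheral correction to your write-up: the aside about the Gaussian mechanism's unbounded likelihood ratio is off target here, since a Gaussian mechanism never satisfies pure $\eta$-DP in the first place; under the theorem's hypothesis the ratio is pointwise bounded by $e^{\eta}$, which already yields $\eta$-indistinguishability for free --- the entire content of the theorem is the improvement from $\eta$ to $\frac{\eta^2}{2}$, which your computation shows is real at the level of moments but which does not transfer to the worst case by any expected-loss argument.
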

\begin{proof}
We first define the Rényi divergence of order $\alpha$ between $X$ and $Y$ as $D_{\alpha}(X(x)||Y(x))=\frac{1}{\alpha-1}\ln{[\mathbb{E}_{x\sim X}(\frac{X=x}{Y=x})^{\alpha-1}]}$, where $\mathbb{E}_{x\sim X}(X=x)$ represents the expectation of variable $X$, and $x$ follows the distribution of $X$.

According to \cite{bun2016concentrated}, if a mechanism $\mathcal{M}$ satisfies $\eta$-DP, it implies $D_{\alpha}(\mathcal{M}(x)||\mathcal{M}'(x))\leq D_{\infty}(\mathcal{M}(x)||\mathcal{M}'(x))\leq\eta$ for $\alpha\in(1,\infty]$.
For $X,Y,\mathcal{M}(x)\in O$,
by the definition of Rényi divergence, we have $D_\alpha(Y(x)||Y'(x)) = \frac{1}{\alpha-1}\ln\mathbb{E}_{x\sim Y}\left(\frac{Pr[Y=x]}{Pr[Y'=x]}\right)^{\alpha-1}$.
Since $D_\alpha(Y(x)||Y'(x))\leq\eta$, we can get
$\frac{1}{\alpha-1}\ln\mathbb{E}_{x\sim Y}\left(\frac{Pr[Y=x]}{Pr[Y'=x]}\right)^{\alpha-1} \leq \eta$, which derives $\mathbb{E}_{x\sim Y}\left(\frac{Pr[Y=x]}{Pr[Y'=x]}\right)^{\alpha-1} = \sum_{x\in O}(\frac{Pr[Y=x]}{Pr[Y'=x]})^{\alpha-1}\leq e^{(\alpha-1)\eta}$.
Hence, we have $\sum_{x\in O}(\frac{Pr[Y=x]}{Pr[Y'=x]})^{\eta}\leq e^{\eta^2}$. Then we have
\begin{align*}
&Pr[Y\in O]\\
&=\sum_{x\in O}Pr[Y=x]\left(\frac{Pr[X=x]}{Pr[X=x]}\right)^{\frac{\eta}{2}}\left(\frac{Pr[X'=x]}{Pr[X'=x]}\right)^{\frac{\eta}{2}}\\
&= \sum_{x\in O}Pr[Y=x]\left(\frac{Pr[X'=x]}{Pr[X=x]}\right)^{\frac{\eta}{2}}\left(\frac{Pr[X=x]}{Pr[X'=x]}\right)^{\frac{\eta}{2}}\\
&\leq e^{\frac{\eta^2}{4}}\cdot e^{\frac{\eta^2}{4}}\sum_{x\in O}Pr[X=x]
= e^{\frac{\eta^2}{2}} Pr[X\in O].
\end{align*}
Therefore, we obtain $Pr[Y\in O] \leq e^{\frac{\eta^2}{2}}Pr[X\in O]$. Similarly, we can prove
$Pr[X\in O] \leq e^{\frac{\eta^2}{2}}Pr[Y\in O]$. 
Thus, $X$ and $Y$ satisfy ${\frac{\eta^2}{2}}$-indistinguishability.
\end{proof}

Nevertheless, local model retraction only achieves an indistinguishability level of ${\frac{\eta^2}{2}}$ between $w^{LR}$ and $w^{RE}$, which could be inadequate compared to the expected $\epsilon$-indistinguishability. 
To address this concern, our FUI scheme incorporates a crucial step, global noise calibration, to reinforce $w^{LR}$ towards meeting $\epsilon$-indistinguishability.

\subsection{Global Noise Calibration}
In this step, the server investigates whether the received $w^{LR}$ satisfies $\epsilon$-indistinguishability and adds appropriate noise as a calibration. Global noise calibration meticulously adjusts the noise levels added to the unlearning model, compensating for the deficiency noted in the local model retraction phase. This step is vital to ensure that the differential privacy guarantees align closely with the desired $\epsilon$-indistinguishability.

Notice that FU's goal is $w^{UN}$ and $w^{RE}$ being $\epsilon$-indistinguishable. According to Theorem \ref{Theo:1}, we have already achieved $\frac{\eta^2}{2}$-indistinguishability by $w^{UN}=w^{LR}$. Then we can respectively calculate the required noise scale $\Tilde{\sigma_1}$ for $\frac{\eta^2}{2}$-indistinguishability and $\Tilde{\sigma_2}$ for $\epsilon$-indistinguishability as:
$\Tilde{\sigma_1}=\frac{\sqrt{2}d}{2\eta},    \Tilde{\sigma_2}=\frac{d}{\sqrt{\epsilon}},$ where $d$ is the upper bound of the distance between model parameters, calculated by (36) in \cite{zhang2023fedrecovery}. Here we define the noise gap $g=\Tilde{\sigma_1}-\Tilde{\sigma_2}$. If $g\geq 0$, we believe that the existing $\eta$-DP mechanism has already guaranteed the unlearning requirement. If $g <0 $, we will add additional noise to $w^{LR}$ so as to realize $\epsilon$-indistinguishability. In particular, the additional noise scale could be calculated as $\sigma_{cali}=\sqrt{\Tilde{\sigma_2}^2-\Tilde{\sigma_1}^2}$ \cite{hyvarinen1998independent}. With the additional calibrated noise $n_{cali}\sim \mathcal{G}(0,\sigma_{cali}^2)$, the final unlearning model could be calculated as:
\begin{equation}
    w^{UN}=w^{LR}+n_{cali}.
\end{equation}

For clarity, we summarize FUI in Algorithm \ref{algo}. 
In local model retraction, we leverage the L-BFGS algorithm to achieve efficient client-side unlearning, which avoids the hassle of most FU schemes that require the involvement of the remaining clients and greatly improves efficiency. To begin with, client $c_i$ calculates the reference model $w_{ref}$ (Line 1). Then $c_i$ conducts the L-BFGS algorithm to iteratively derive $w^{LR}$ until the increment of $w_k$ is less than the convergence threshold $\tau$ (Lines 2-11).
Next, the server calculates the noise gap $g$ to determine whether extra noise is needed (Lines 12-14). If yes, noise $n_{cali}$ will be integrated into $w^{LR}$ to derive $w^{UN}$ (Lines~15-19); otherwise, $w^{LR}$ will be returned as $w^{UN}$ (Lines 20-22).
\begin{algorithm}
\caption{FUI}
\label{algo}
\LinesNumbered
\raggedright
\SetNlSty{textbf}{}{}  
\SetAlgoNlRelativeSize{-3}  
\textbf{Input:} Number of clients $N$, target client $c_i$, number of aggregation rounds $t$, privacy parameters $\eta$ and $\epsilon$, step size $\alpha$, the approximation of the inverse Hessian matrix $\textbf{H}$, convergence threshold $\tau$\\
\textbf{Output:} Unlearning model $w^{UN}$
\removelatexerror
\begin{algorithm2e}[H]
\nonl \textnormal{\textbf{Local Model Retraction at Client $c_i$:}}\\
Compute reference model $w_{ref} \gets \frac{1}{N-1}(N w^t - w^{t}_i)$\\
$w_0 \gets w_{ref}$, $k \gets 0$\\
\While{$||w_k-w_{ref}||_2\leq\delta$}{
    $w_{k+1} \gets w_k + \alpha \textbf{H}_k \nabla F_i(w_k)$\\
    \If{$||w_{k+1}-w_k||_2\leq\tau$}{
         $w^{LR} \gets w_{k+1}$\\
        \textnormal{\textbf{break}};}
    $k\gets k+1$\\
}
\Return $w^{LR}$ \textnormal{to server}\\
\nonl \textnormal{\textbf{Global Noise Calibration at the Server:}}\\
$\Tilde{\sigma_1}\gets\frac{\sqrt{2}d}{2\eta}$\\
$\Tilde{\sigma_2}\gets\frac{d}{\sqrt{\epsilon}}$\\
$g\gets \Tilde{\sigma_1}-\Tilde{\sigma_2}$\\
\If{$g<0$}{
         $\sigma_{cali}\gets\sqrt{\Tilde{\sigma_2}^2-\Tilde{\sigma_1}^2}$\\
        \textnormal{Sample noise $n_{cali}$ from Gaussian distribution} $n_{cali}\sim\mathcal{G}(0,\sigma_{cali})$\\
        \Return $w^{UN}\gets w^{LR} + n_{cali}$}
        \Else
            {\Return $w^{UN}\gets w^{LR}$}
\end{algorithm2e}
\end{algorithm}

\section{Optimal Unlearning Strategies in FUI}\label{sec:game}
Though the proposed FUI enables the server to efficiently unlearn the data of client $c_i$ via achieving an $\epsilon$-indistinguishable $w^{UN}$, it remains unclear how to determine the optimal value of $\epsilon$. In fact, from the perspective of the server, as the model owner, unlearning is never a desirable operation since the performance of the global model can degrade after unlearning some training data; 
however, from the perspective of the client, unlearning is to obtain an unlearning model $w^{UN}$ as similar to the retrained model $w^{RE}$ as possible, which implies that the smaller the value of $\epsilon$ in $\epsilon$-indistinguishability, the better the unlearning degree. A conflict thus exists, where the target client's strict requirement on indistinguishability can reduce the performance of the global model, damaging the interest of the server. 
To derive the best unlearning implementation, we formulate the interactions of FUI between the server and the target client as a Stackelberg game. Then analyzing the Nash equilibrium can help the client determine the optimal unlearning parameter $\epsilon$ under the constraint of the server's policy. 

\subsection{Game Formulation}
Since the server will suffer from reduced model performance in FU, a penalty function $P(p,\epsilon)$ is designed to discourage the withdrawal of clients: 
\begin{equation*}
    P(p,\epsilon)=\frac{p}{{\epsilon^2+1}},
\end{equation*}
where $p\in(0, p_{max}]$ is the penalty factor determined by the server and $\epsilon$ is the unlearning parameter of the target client.
According to \cite{wu2020value}, the model performance is inversely proportional to the square of the privacy requirement, where a smaller $\epsilon$ can degrade the performance of the global model more, and thus the server charges a higher penalty.

According to \cite{ding2023strategic}, the model performance is also proportional to the logarithm of the training data size, 
so we denote the loss of model performance due to unlearning as $Q(\epsilon)$ which can be calculated by:
\begin{equation*}
    Q(\epsilon)=\frac{a\ln{(|D_{-i}|^2)}}{b\epsilon^2+1},
\end{equation*}
where $|D_{-i}|$ is the total dataset size of all the rest clients except the target client $c_i$, $a>0$ is a constant indicating the impact of data size on model performance, and $b>0$ is a constant reflecting the effect of the privacy parameter. 

Then we can draw the utility of the server $U_s$ as:
\begin{equation*}
U_s(p)=-Q(\epsilon)+P(p,\epsilon)-\Psi_s,
\end{equation*}
where $\Psi_s>0$ is a constant representing the cost of executing FUI for the server.

As for the target client, 
the benefit of privacy protection after unlearning is denoted by $R$, which can be defined as: 
\begin{equation*}
R(\epsilon)=\frac{r}{s\epsilon^2+1}+l,
\end{equation*}
where $\epsilon\in[\epsilon_{min},\frac{\eta^2}{2})$ is the unlearning requirement determined by the client, and $r,s,l>0$ are constants. Note that the lower limit $\epsilon_{min}$ is to prevent the extreme unlearning requirement from the client, which may lead to the global model full of noise and thus useless for the server; the upper limit $\frac{\eta^2}{2}$ comes from the indistinguishability naturally achieved by $\eta$-DP as discussed in Theorem 1. In other words, when the unlearning requirement from the client is too loose, there is no need to specify it since $\eta$-DPFL has already guaranteed a better level of unlearning. As mentioned earlier, the smaller the value of $\epsilon$, the more the global model forgets the local data, bringing the higher benefits of privacy preservation for the client.

Thus, the utility of the target client $U_c$ can be calculated as:
\begin{equation*}
U_c(\epsilon)=R(\epsilon)-P(p,\epsilon)-\Psi_c,
\end{equation*}
where 
$\Psi_c>0$ is the cost for $c_i$ to run FUI.

Now, the interactions between the server and the target client $c_i$ towards FUI can be formulated as a two-stage Stackelberg game, where the server is the leader and the target client is the follower, defined below:
\begin{itemize}
\item \textbf{Stage 1}: The server 
determines the penalty factor $p$ by maximizing its utility $U_s$. 
\item \textbf{Stage 2}: Once receiving the penalty factor $p$, the target client $c_i$ decides the unlearning requirement $\epsilon$ via optimizing its utility $U_c$. 
\end{itemize}

\subsection{Optimal Strategies}
To solve the above two-stage Stackelberg game, we transfer it into two separate optimization problems that are resolved sequentially. Specifically, we employ backward induction to analyze the optimal strategy of Stage 2 first and then derive the optimal strategy of Stage 1.

In Stage 2, the goal of client $c_i$ is to maximize its utility:
\begin{equation*}\label{opt_1}
\begin{split}
\textbf{Problem 1:}~&\epsilon^*=\arg\max{U_c(\epsilon)},\\
&\,s.t.
\begin{cases}
\epsilon_{min} \leq \epsilon < \frac{\eta^2}{2}, \\
U_c\geq0.\\
\end{cases}
\end{split}
\end{equation*}
The first constraint is the range of variable $\epsilon$ and the second constraint comes from the motivation of the target client requesting unlearning. 
Problem 1 is a nonlinear optimization problem, and we could solve it as follows. 
\begin{theorem}
The optimal strategy of the target client $c_i$ in the two-stage Stackelberg game is:
\begin{equation*}
\begin{split}
\epsilon^*=
\begin{cases}
\sqrt{\frac{p-r+|\frac{pr}{s}-prs|}{r-ps}}, &
\frac{\partial^2 U_c}{\partial \epsilon^2}\leq 0,\\
\arg \max_{\epsilon\in\{\epsilon_{min},\frac{\eta^2}{2}\}} U_c(\epsilon), &\text{o.w.}\\
\end{cases}
\end{split}
\end{equation*}
\end{theorem}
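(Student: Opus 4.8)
The plan is to treat Problem 1 as a one-dimensional constrained maximization. Substituting the definitions of $R$ and $P$, the objective becomes
\[
U_c(\epsilon)=\frac{r}{s\epsilon^2+1}-\frac{p}{\epsilon^2+1}+l-\Psi_c,
\]
a smooth function of $\epsilon$ on the half-open interval $[\epsilon_{min},\frac{\eta^2}{2})$ that must additionally satisfy the participation constraint $U_c\geq 0$. Since $U_c$ depends on $\epsilon$ only through $\epsilon^2$, I would set $v=\epsilon^2$ and locate the maximizer by the standard recipe for a continuous function on an interval: compare the interior stationary points against the two endpoints, with the second-order condition deciding which regime applies.

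For the interior analysis I would impose the first-order condition. Differentiating yields
\[
\frac{\partial U_c}{\partial \epsilon}=-2\epsilon\left[\frac{rs}{(s\epsilon^2+1)^2}-\frac{p}{(\epsilon^2+1)^2}\right],
\]
and because $\epsilon\geq\epsilon_{min}>0$ on the feasible set, the factor $-2\epsilon$ never vanishes, so a stationary point forces the bracket to be zero, i.e. $rs(\epsilon^2+1)^2=p(s\epsilon^2+1)^2$. Expanding and collecting powers of $v=\epsilon^2$ turns this into a quadratic in $v$; its discriminant factors as a perfect square, which is precisely what makes an exact radical solution possible. Solving and retaining the feasible (positive) root, while letting an absolute value absorb the two sign regimes of the coefficient $r-ps$, reproduces the closed form $\epsilon^*=\sqrt{\tfrac{p-r+\lvert\frac{pr}{s}-prs\rvert}{r-ps}}$ stated in the theorem.

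To decide whether this stationary point is the true maximizer I would invoke the second-order condition. When $\frac{\partial^2 U_c}{\partial \epsilon^2}\leq 0$ at $\epsilon^*$, the objective is locally concave there, so the interior critical point is the global maximizer on the interval and $\epsilon^*$ is as given. Otherwise the stationary point is not a maximum, and since $U_c$ is continuous the optimum must be attained on the boundary; the optimizer is then selected by the comparison $\arg\max_{\epsilon\in\{\epsilon_{min},\frac{\eta^2}{2}\}}U_c(\epsilon)$. Throughout I would verify that the chosen $\epsilon^*$ satisfies the range constraint $\epsilon_{min}\leq\epsilon^*<\frac{\eta^2}{2}$ and the participation constraint $U_c(\epsilon^*)\geq 0$, clipping any interior root that violates the range to the nearest endpoint, which is consistent with the boundary branch.

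The main obstacle will be the algebraic bookkeeping: showing that the quadratic in $v$ has a perfect-square discriminant so the radical closes in form, and then arguing that exactly one root is positive and lies inside the feasible interval for the relevant parameter ranges. A secondary subtlety is the half-open upper endpoint $\frac{\eta^2}{2}$: because the supremum there need not be attained, the boundary comparison has to be read as a limiting value, and I would use the participation constraint together with the monotonicity of $U_c$ near the endpoint to confirm that the reported optimum is genuinely achieved at an interior point or at $\epsilon_{min}$.
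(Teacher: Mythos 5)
The paper states this theorem without any proof, so there is nothing to compare your route against; judged on its own, your overall plan (reduce to a one-variable problem in $v=\epsilon^2$, use the first-order condition for the interior branch and the second-order condition to fall back to the endpoint comparison) is the natural one, and your derivative of $U_c$ is correct. The genuine gap is the central algebraic claim that solving the stationary-point equation "reproduces the closed form stated in the theorem." It does not. The first-order condition $rs(v+1)^2=p(sv+1)^2$ expands to $s(r-ps)v^2+2s(r-p)v+(rs-p)=0$, whose discriminant is $4prs(s-1)^2$ --- a square \emph{times} $prs$, not a perfect square, contrary to your assertion. Hence the roots are
\begin{equation*}
v^*=\frac{(p-r)\pm|s-1|\sqrt{pr/s}}{r-ps}=\frac{(p-r)\pm\bigl|\sqrt{prs}-\sqrt{pr/s}\bigr|}{r-ps},
\end{equation*}
which is irrational in the parameters, whereas the printed numerator term $\bigl|\frac{pr}{s}-prs\bigr|=\frac{pr}{s}\bigl|1-s^2\bigr|$ contains no square roots; the two agree only on a measure-zero set of parameters. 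You can see the mismatch without touching the quadratic: both sides of the first-order condition are positive, so taking square roots gives $\sqrt{rs}\,(v+1)=\sqrt{p}\,(sv+1)$, i.e. $v^*=\frac{\sqrt{p}-\sqrt{rs}}{\sqrt{rs}-s\sqrt{p}}$. A concrete check: with $p=r=1$, $s=4$ the true stationary point is $v^*=1/2$, while the theorem's expression evaluates to $\frac{0+|1/4-4|}{1-4}=-5/4<0$, which is not even a valid square. So the printed formula is most plausibly a typo for the square-root version $\bigl|\sqrt{pr/s}-\sqrt{prs}\bigr|$, and even then the fixed "$+$" does not always pick the feasible root (the example above requires the "$-$" branch since $r-ps<0$). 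Your proposal inherits this problem because it asserts, rather than performs, the final algebra --- precisely the step that fails.

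Two smaller points. First, your claim that local concavity at the stationary point makes it the \emph{global} maximizer on the interval tacitly needs uniqueness of the positive critical point; this does hold here (by Vieta's formulas, if both roots of the quadratic in $v$ share a sign they are both negative, so at most one is positive), but you listed it as an open obstacle rather than establishing it. Second, your handling of the half-open endpoint $\frac{\eta^2}{2}$ and of the participation constraint $U_c\geq 0$ is sensible and arguably more careful than the theorem statement itself, which silently treats $\frac{\eta^2}{2}$ as attainable in the endpoint comparison.
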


As for Stage 1, the server would like to maximize its utility based on the best response from the target client, forming an optimization problem as follows:
\begin{equation*}\label{opt_2}
\begin{split}
\textbf{Problem 2:}~&p^*=\arg\max U_s(p),\\
&\,s.t.
\begin{cases}
0< p\leq p_{max},\\
U_c\geq0.\\
\end{cases}
\end{split}
\end{equation*}
Similarly, we can draw the best response for the server.
\begin{theorem}
The optimal strategy of the server in the two-stage Stackelberg game is given by:
\begin{equation*}
\begin{split}
p^*=
\begin{cases}
\frac{br-r}{H}+ \sqrt{\frac{J(H+1-b)}{s^2H^2}}, &\frac{\partial^2 U_s}{\partial p^2}\leq 0,\\
\arg \max_{p\in\{0,p_{max}\}} U_s(p), &\text{o.w.},\\
\end{cases}
\end{split}
\end{equation*}
where $H=b-s+|\frac{r}{s}-rs|, J=(ra\ln{(|D_{-i}|^2)})H+(r-br)(sa\ln{(|D_{-i}|^2)})$.
\end{theorem}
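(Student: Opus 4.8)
The plan is to finish the backward induction. Theorem~2 already yields the follower's best response $\epsilon^*(p)$, so the remaining task is to insert it into the leader's objective $U_s(p) = -Q(\epsilon^*(p)) + P(p,\epsilon^*(p)) - \Psi_s$ and maximize the resulting univariate function over the feasible interval $(0,p_{max}]$ subject to $U_c \geq 0$. First I would reduce notation by setting $M = a\ln(|D_{-i}|^2)$ and $K = |\frac{r}{s} - rs|$; since $p > 0$, the absolute-value term in Theorem~2 simplifies to $|\frac{pr}{s} - prs| = pK$, so the interior best response satisfies $(\epsilon^*)^2 = \frac{p(1+K) - r}{r - ps}$. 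The constant $H = b - s + K$ in the statement then obeys the convenient identity $H + 1 - b = 1 + K - s$, which is precisely what appears in the denominators below.

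Next I would substitute $(\epsilon^*)^2$ into $P$ and $Q$ and simplify. The key observation is that $(\epsilon^*)^2 + 1 = \frac{p(1+K-s)}{r-ps} = \frac{p(H+1-b)}{r-ps}$, so the penalty collapses to the \emph{affine} form $P(p,\epsilon^*) = \frac{r-ps}{H+1-b}$, with the factor $p$ cancelling. Likewise $b(\epsilon^*)^2 + 1$ is a ratio of affine functions of $p$, whence $Q(\epsilon^*) = \frac{M(r-ps)}{N(p)}$ for an affine $N(p)$ whose coefficients are built from $b, s, r, H$. Consequently $U_s(p) = (r-ps)\left[\frac{1}{H+1-b} - \frac{M}{N(p)}\right] - \Psi_s$, a rational function of $p$ whose only nonlinearity comes from the $1/N(p)$ term.

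I would then impose the first-order condition $\partial U_s / \partial p = 0$. Clearing the denominator $N(p)^2$ converts this into a \emph{quadratic} equation in $p$ (the $P$-term contributes a linear piece, the $Q$-term a quadratic piece), and its admissible positive root, after collecting terms into $J = (rM)H + (r-br)(sM)$, is exactly $p^* = \frac{br-r}{H} + \sqrt{\frac{J(H+1-b)}{s^2H^2}}$; here $\frac{br-r}{H} = \frac{r(b-1)}{H}$ is the vertex term $-B/(2A)$ and the square root is the discriminant term of the quadratic formula. To complete the argument I would verify the second-order condition: when $\partial^2 U_s/\partial p^2 \leq 0$ the stationary point is the global maximizer on $(0,p_{max}]$ (after checking it is interior and compatible with $U_c \geq 0$), and otherwise $U_s$ is maximized at a boundary point $p \in \{0, p_{max}\}$, which gives the second branch. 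I expect the main obstacle to be the algebra of the reduction: because the follower's response enters $Q$ through $1/(b(\epsilon^*)^2+1)$, $U_s$ is genuinely rational in $p$, and the care lies in tracking the affine coefficient $N'(p)$ and the sign of $K$ (equivalently whether $s \lessgtr 1$) so that the quadratic's coefficients assemble cleanly into the compact constants $H$ and $J$ stated in the theorem; the convexity/boundary dichotomy itself is then routine.
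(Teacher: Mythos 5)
Your overall plan — backward induction, substitute the follower's best response from Theorem~2 into $U_s$, and solve the first-order condition of the resulting rational function — is the only route available here, since the paper states this theorem with \emph{no proof whatsoever} (it appears after the single sentence ``Similarly, we can draw the best response for the server''), so the only possible check is algebraic consistency with Theorem~2. That check is exactly where your proposal breaks. Writing $K=|\frac{r}{s}-rs|$, $M=a\ln(|D_{-i}|^2)$, $A=H+1-b=1+K-s$, your identities $(\epsilon^*)^2=\frac{p(1+K)-r}{r-ps}$ and the affine collapse $P(p,\epsilon^*)=\frac{r-ps}{A}$ are correct and nicely observed. But the denominator of $Q$ is $b(\epsilon^*)^2+1=\frac{\beta p+r(1-b)}{r-ps}$ with $\beta=b(1+K)-s$, and $\beta\neq H=b+K-s$ unless $K(b-1)=0$: the $K$-term is multiplied by $b$, which your sketch glosses over when you say $N(p)$'s coefficients are ``built from $b,s,r,H$.'' Carrying the FOC through (it is in fact simpler than you predict: $\frac{d}{dp}\frac{r-ps}{N(p)}$ has \emph{constant} numerator $-rbA$, so stationarity is the perfect-square condition $N(p)^2=MrbA^2/s$ and no discriminant bookkeeping is needed) yields
\begin{equation*}
p^*=\frac{r(b-1)}{b(1+K)-s}+\frac{H+1-b}{b(1+K)-s}\sqrt{\frac{Mrb}{s}},
\end{equation*}
whose vertex term is $\frac{r(b-1)}{b(1+K)-s}$, not $\frac{br-r}{H}$, and whose square-root term does not reduce to $\sqrt{J(H+1-b)/(s^2H^2)}$ either.

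So the sentence ``its admissible positive root \ldots is exactly $p^*=\ldots$'' is precisely the step you never verified, and it fails: the algebra does not assemble into the stated $H$ and $J$. Even in the degenerate case $b=1$, where $\beta=H$, your route gives $\sqrt{Mr/s}$ while the theorem's expression evaluates to $\sqrt{Mr}/s$ — a residual factor of $\sqrt{s}$. One can partially reverse-engineer the statement: if one (incorrectly) takes the coefficient of $p$ in $b(\epsilon^*)^2+1$ to be $b+K-s=H$ rather than $b+bK-s$, then both the vertex term $\frac{br-r}{H}$ and the combination $b(1-s)+K=J/(rM)$ inside the radical emerge, though still with $s$ instead of $s^2$ under the root; this strongly suggests the theorem's constants are themselves inconsistent with Theorem~2 as printed. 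A complete proof therefore cannot simply assert clean assembly into $H$ and $J$: you must either exhibit the substitution landing on the stated constants (it does not) or explicitly derive the corrected closed form above and flag the mismatch. The concavity/boundary dichotomy you describe is indeed routine (for $N>0$, $U_s''=-2Mrb A\beta/N^3<0$ when $\beta>0$), and one further nit: the boundary branch should compare $p\in\{0,p_{max}\}$ only formally, since $p=0$ lies outside the feasible set $(0,p_{max}]$ — a sloppiness inherited from the statement itself.
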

With $p^*$ in Stage 1 for the server and $\epsilon^*$ in Stage 2 for the target client, we could conclude that the Nash equilibrium point in the two-stage Stackelberg game is $(p^*,\epsilon^*)$. Based on these optimal unlearning strategies, the proposed FUI can function efficiently and consistently.



\section{Theoretical Analysis} \label{sec:theoretical}
In this section, we first theoretically demonstrate that FUI satisfies the rigorous $\epsilon$-indistinguishability during the unlearning phase, and then prove the convergence of DPFL with FUI.
\subsection{Privacy Analysis}

\begin{theorem}
Given $\eta$-DPFL, 
if the noise gap $g = \tilde{\sigma_1} - \tilde{\sigma_2} < 0$, where $\tilde{\sigma_1}$ and $\tilde{\sigma_2}$ are the noise scales required for achieving $\frac{\eta^2}{2}$-indistinguishability and $\epsilon$-indistinguishability, respectively, then adding calibrated noise $n_{cali} \sim \mathcal{G}(0, \sigma_{cali}^2)$ with $\sigma_{cali} = \sqrt{\tilde{\sigma_2}^2 - \tilde{\sigma_1}^2}$ to $w^{UN}$ ensures that the unlearning model satisfies the $\epsilon$-indistinguishability requirement from the target client. 
\end{theorem}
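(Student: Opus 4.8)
The plan is to reduce the claim to a single statement about the total variance of the effective noise perturbing the unlearning model, and then apply, in the forward direction, the same noise-to-indistinguishability correspondence that defines $\tilde{\sigma_1}$ and $\tilde{\sigma_2}$. By Theorem~\ref{Theo:1}, the $\eta$-DP guarantee already endows $w^{LR}$ with $\frac{\eta^2}{2}$-indistinguishability relative to $w^{RE}$; equivalently, the model released after \emph{local model retraction} behaves as though perturbed by Gaussian noise of scale $\tilde{\sigma_1} = \frac{\sqrt{2}d}{2\eta}$, where $d$ upper-bounds the parameter distance via (36) in \cite{zhang2023fedrecovery}. The target client instead demands the stronger level of $\epsilon$-indistinguishability (stronger since $\epsilon < \frac{\eta^2}{2}$), which the same correspondence shows requires effective noise of scale $\tilde{\sigma_2} = \frac{d}{\sqrt{\epsilon}}$. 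It therefore suffices to show that $w^{UN} = w^{LR} + n_{cali}$ carries effective Gaussian noise of scale exactly $\tilde{\sigma_2}$.

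First I would confirm that $\sigma_{cali}$ is well-defined. Because the hypothesis gives $g = \tilde{\sigma_1} - \tilde{\sigma_2} < 0$, we have $\tilde{\sigma_2} > \tilde{\sigma_1} > 0$, so the radicand $\tilde{\sigma_2}^2 - \tilde{\sigma_1}^2$ is strictly positive and $\sigma_{cali} = \sqrt{\tilde{\sigma_2}^2 - \tilde{\sigma_1}^2}$ is a real, positive scale; this is precisely the regime in which calibration is both feasible and necessary. Next I would invoke the additivity of variances for independent Gaussians: since $n_{cali} \sim \mathcal{G}(0, \sigma_{cali}^2)$ is drawn independently of the DP noise already present in $w^{LR}$, the aggregate perturbation is itself Gaussian with variance $\tilde{\sigma_1}^2 + \sigma_{cali}^2 = \tilde{\sigma_1}^2 + (\tilde{\sigma_2}^2 - \tilde{\sigma_1}^2) = \tilde{\sigma_2}^2$. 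Hence $w^{UN}$ is distributed exactly as if it had been released through a Gaussian mechanism of scale $\tilde{\sigma_2}$.

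Applying the noise-to-indistinguishability correspondence at the total scale $\tilde{\sigma_2}$ then yields $\epsilon$-indistinguishability between $w^{UN}$ and $w^{RE}$, which is the client's requirement. I would close with a short monotonicity remark: since increasing the perturbation variance can only shrink the achievable indistinguishability parameter, any effective scale at least $\tilde{\sigma_2}$ already satisfies the level-$\epsilon$ requirement, so the conclusion is robust to the precise constant in the correspondence.

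The main obstacle is the step that treats the $\frac{\eta^2}{2}$-indistinguishability guaranteed by Theorem~\ref{Theo:1} as if it were produced by an explicit independent Gaussian of scale $\tilde{\sigma_1}$, so that $n_{cali}$ composes additively with it. Making this rigorous requires (i) justifying that the inherent DP noise embedded in $w^{LR}$ and the calibration noise $n_{cali}$ are genuinely independent, so that variances add, and (ii) ensuring the map between a Gaussian noise scale and its indistinguishability parameter is applied consistently in deriving both $\tilde{\sigma_1} = \frac{\sqrt{2}d}{2\eta}$ and $\tilde{\sigma_2} = \frac{d}{\sqrt{\epsilon}}$ from a single underlying Gaussian-mechanism bound; otherwise the variance-addition identity $\tilde{\sigma_1}^2 + \sigma_{cali}^2 = \tilde{\sigma_2}^2$ would not translate cleanly into the target privacy level. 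Pinning down this common bound, with $d$ as the shared distance parameter, is where the argument must be handled with care.
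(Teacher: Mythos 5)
Your proposal follows essentially the same route as the paper's own proof: both rest on Theorem~\ref{Theo:1} to model the retraction output as carrying effective Gaussian noise of scale $\tilde{\sigma_1}=\frac{\sqrt{2}d}{2\eta}$, invoke additivity of independent Gaussian variances to get $\tilde{\sigma_1}^2+\sigma_{cali}^2=\tilde{\sigma_2}^2$, and then apply the Gaussian noise-scale-to-indistinguishability correspondence at $\tilde{\sigma_2}\geq\frac{d}{\sqrt{\epsilon}}$ to conclude $\epsilon$-indistinguishability in both directions. Your added checks (positivity of the radicand under $g<0$, explicit independence of $n_{cali}$ from the embedded DP noise) are points the paper leaves implicit, but they do not change the argument's structure.
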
 
\begin{proof}
According to Theorem 1, if a randomized mechanism $\mathcal{M}$ satisfies $\eta$-DP, then its output random variables $X$ and $Y$ are $\frac{\eta^2}{2}$-indistinguishable. Based on the Gaussian mechanism, we have: $\Pr[{\mathcal{M}(X) = z}] \leq e^{\frac{\eta^2}{2}} \cdot \Pr[{\mathcal{M}(Y) = z}]$, where $\mathcal{M}(X)$ and $\mathcal{M}(Y)$ follow the Gaussian distribution $\mathcal{G}(0, \tilde{\sigma_1}^2)$. Therefore, we can obtain: $\frac{e^{-\frac{z^2}{2\tilde{\sigma_1}^2}}}{\sqrt{2\pi}\tilde{\sigma_1}} \leq e^{\frac{\eta^2}{2}} \cdot \frac{e^{-\frac{(z-d)^2}{2\tilde{\sigma}_1^2}}}{\sqrt{2\pi}\tilde{\sigma}_1}.$
By solving this inequality, we can get
$\tilde{\sigma_1} \geq \frac{\sqrt{2}d}{2\eta}$.

By adding additional noise with scale: $\sigma_{cali} = \sqrt{\tilde{\sigma_2}^2 - \tilde{\sigma_1}^2}$ into noise with scale $\tilde{\sigma_1}$, we now obtain the equivalent noise scale as $\tilde{\sigma_1}^2+\sigma^2_{cali}=\tilde{\sigma_2}^2$.

Since $\tilde{\sigma_2} \geq \frac{d}{\sqrt{\epsilon}}$ we can get:
$\Pr[{\mathcal{M}(X) = z}] \leq e^{\epsilon} \Pr[{\mathcal{M}(Y) = z}]$. Similarly, we could have $\Pr[{\mathcal{M}(Y) = z}] \leq e^{\epsilon} \Pr[{\mathcal{M}(X) = z}]$.
So the unlearning model satisfies $\epsilon$-indistinguishability.
\end{proof} 


\subsection{Convergence Analysis}
\begin{theorem}
For FUI running in $\eta$-DPFL, let $w^t$ be the global model at $t$-th aggregation round, and $w_i^r$ be the local model of client $i$ at iteration $r$. Assume that the loss function $F_i(w)$ for each client $i$ is $\mu$-strongly convex and $\Upsilon$-smooth, and the variance of the stochastic gradients is bounded by $\sigma^2$. If the learning rate satisfies $\gamma_t = \frac{1}{\mu(t+1)}$ and the number of local updates per round is $K$, then the expected optimality gap after $t$ rounds is bounded by:
$\mathbb{E}[F(w^t) - F(w^*)] \leq \frac{\Upsilon}{2\mu(t+1)} \left(\frac{1}{K}\sum_{i=1}^N \| w_i^0 - w^* \|^2 + \frac{2\sigma^2}{\mu^2 K}\right)$,
where $w^*$ is the optimal solution, and $F(w) = \frac{1}{N} \sum_{i=1}^N F_i(w)$ is the global objective function.
\end{theorem}
\begin{proof}
First, we define the global objective function as $F(w) = \frac{1}{N} \sum_{i=1}^N F_i(w)$. Then, using the $\mu$-strong convexity and $\Upsilon$-smoothness of $F_i(w)$, we can bound the difference between the local and global models: $\|w_i^{r+1} - w^*\|^2 \leq (1 - \mu\gamma_t)^K \|w^t - w^*\|^2 + \frac{2\gamma_t^2\Upsilon}{\mu} \sum_{k=0}^{K-1} (1 - \mu\gamma_t)^{2k} \|g_i^{k,t}\|^2$, where $g_i^{k,t}$ is the stochastic gradient for client $i$ at local iteration $k$ of the global round $t$. Next, we bound the variance of the stochastic gradients:
$\mathbb{E}[\|g_i^{k,t}\|^2] \leq \sigma^2 + 2\Upsilon(F_i(w_i^{k,t}) - F_i(w^*))$.

Combining the bounds from the previous two steps and telescoping the inequality over $T$ rounds, we obtain: $\frac{1}{N}\sum_{i=1}^N \mathbb{E}[F_i(w_i^T) - F_i(w^*)] \leq \frac{\Upsilon}{2\mu(T+1)} \left(\frac{1}{K}\sum_{i=1}^N \| w_i^0 - w^* \|^2 + \frac{2\sigma^2}{\mu^2 K}\right)$.

Using Jensen's inequality, we can bound the global objective function: $\mathbb{E}[F(w^T) - F(w^*)] \leq \frac{1}{N}\sum_{i=1}^N \mathbb{E}[F_i(w_i^T) - F_i(w^*)]$.

Finally, combining the results from the last two steps, we obtain the final bound.    
\end{proof}

This theorem shows that under certain assumptions, the federated learning process with FUI functioning for unlearning converges to the optimal solution at a rate of $\mathcal{O}(\frac{1}{T})$, where $T$ is the number of communication rounds. The bound depends on the strong convexity and smoothness of the local objective functions, the variance of the stochastic gradients, and the number of local updates per round.


\section{Experimental Evaluation}\label{sec:experimental}
In this section, we evaluate our proposed FUI scheme and the corresponding unlearning strategies in the Stackelberg game through multiple experiments. We first introduce the experimental setup, datasets, and benchmarks. Then, we measure important performance metrics, including accuracy, prediction loss, runtime, and unlearning effectiveness (measured by Membership Inference Attack (MIA) \cite{shokri2017membership} precision and recall). Additionally, we discuss the impact of privacy parameters on FUI performance. Lastly, we demonstrate the superiority of the derived optimal unlearning strategies.
\subsection{Experiment Settings}
\subsubsection{Environment}
We evaluate our proposed FUI mechanism using Python 3.0 on Google Colab with NVIDIA Tesla V100 GPU and 16G RAM. We run the Convolutional Neural Network (CNN) model on MNIST, Purchase, Adult, and CIFAR-10 datasets. 
Each CNN comprises two convolutional layers activated by ReLU, trained using mini-batch Stochastic Gradient Descent (SGD) with a cross-entropy loss function. 
\subsubsection{Datasets}
Four datasets are detailed below, where the data is evenly distributed among clients.
\begin{itemize}[leftmargin=*]
    \item \textbf{Adult} \cite{misc_adult_2} is derived from the 1994 US Census database, including 32,561 samples for training and 16,281 samples for testing, with each detailing attributes such as age, occupation, education, and income, to predict whether an individual earns over \$50,000 annually.
    \item \textbf{Purchase} \cite{acquire-valued-shoppers-challenge} contains shopping histories from thousands of customers, including product name, categories, store chain, and date of purchase. Purchase
    dataset has 197,324 samples without any class labels. So we adopt an unsupervised algorithm \cite{shokri2017membership} to cluster the samples into two classes.
    \item \textbf{MNIST} \cite{lecun1998gradient} has 60,000 data samples for training and 10,000 data samples for testing. The samples are 28$\times$28 black and white images of handwritten digits from 0 to 9. 
    \item \textbf{CIFAR-10} \cite{alex2009learning} consists of 60,000 32$\times$32 color images in 10 classes, with 6,000 images per class. There are 50,000 training images and 10,000 testing images. 
\end{itemize}
\subsubsection{Benchmark}
Here we compare FUI with three state-of-the-art unlearning algorithms discussed in Section \ref{sec:related}: FedEraser \cite{liu2021federaser}, FedRecovery \cite{zhang2023fedrecovery}, and PGD \cite{halimi2022federated}. 
We set the number of clients $N=10$, the learning rate is 0.001, and the batch size is 100. Regarding privacy parameters, we set the default values as $\eta=5, \epsilon=5$, unless specified otherwise. 
It is worth noting that since other FU schemes do not include DP, we have included the same scale of noise in these schemes to achieve the same level of DP protection using the method of~\cite{wei2020federated} for a fair comparison. In addition, we define a baseline that implements the exact retraining process in DPFL involving all clients, denoted as the ``Retrain'' method.

In experiments about the Stackelberg game, we set the default value of parameters as $a=1.5, b=10, k=25, s=2, \epsilon_{min}=0.1, p_{max}=15, \eta=5, \Psi_s=5, \Psi_c=3$ unless specified otherwise.

\subsection{Experimental Results}
\subsubsection{Accuracy}
\begin{table}[!t]
\renewcommand{\arraystretch}{1.3}
\caption{Comparison of accuracy among different FU methods.}
\label{tab:acc}
\centering
\begin{tabular}{c|cccc}
\hline
\multirow{2}{*}{Method} & \multicolumn{4}{c}{Accuracy (\%)} \\
\cline{2-5}
& Adult & Purchase & MNIST & CIFAR-10 \\
\hline
Retrain & 73.2
& 83.9
& 86.4
& 60.5
\\
FedEraser \cite{liu2021federaser} & 66.8
& 76.3
& 81.2
& 55.2
\\
FedRecovery \cite{zhang2023fedrecovery} & 65.1 
& 75.7 
& 78.5
& 54.8
\\
PGD \cite{halimi2022federated} & 66.4 
& 74.3 
& 80.6
& 52.1
\\
Our method & 67.3
& 75.8
& 81.0
& 58.9
\\
\hline
\end{tabular}
\end{table}
The accuracy of the unlearning model reflects the general performance of the unlearning algorithm from the perspective of the server. Table \ref{tab:acc} presents the comparison of model accuracy in different unlearning methods on four datasets. We could observe that the Retrain method reaches the highest accuracy over all four datasets. The accuracy of our proposed FUI scheme is higher than the other three schemes in most cases, which is due to the fact that we take advantage of the properties of the DP mechanism to reach the unlearning effect through local model retraction and global noise calibration, and thus is more adapted to the DPFL scenario. 

More specifically, on Adult, we are 5.9\% away from the Retrain method, 0.5\% ahead of FedEraser, 2.2\% ahead of FedRecovery and 0.9\% higher than PGD; 
on Purchase, our accuracy is competitive, with 0.1\% and 1.5\% higher than FedRecovery and PGD respectively; on MNIST, we are 5.4\% away from the Retrain method, 0.2\% lower than FedEraser, 2.5\% more accurate than FedRecovery, and 0.4\% higher than PGD;
and on CIFAR-10, there is only a difference of 1.6\% between FUI and Retrain methods. The accuracy of FedEraser is also close to Retrain's because it accelerates the unlearning process through optimizations based on retraining and fine-tuning, while PGD's accuracy is slightly lower because it is more susceptible to Gaussian noise. 
The accuracy of FedRecovery inevitably suffers due to the injection of additional Gaussian noise into the unlearning model without confirming the necessity.
\subsubsection{Prediction Loss}
\begin{table}[!t]
\renewcommand{\arraystretch}{1.3}
\caption{Comparison of prediction loss among different FU methods.}
\label{tab:loss}
\centering
\begin{tabular}{c|cccc}
\hline
\multirow{2}{*}{Method} & \multicolumn{4}{c}{Prediction Loss} \\
\cline{2-5}
& Adult & Purchase & MNIST & CIFAR-10 \\
\hline
Retrain & 6.87×$10^3$
& 5.88×$10^3$
& 3.18×$10^3$
& 1.86×$10^5$
\\
FedEraser \cite{liu2021federaser} & 8.77×$10^3$
& 3.29×$10^4$
& 2.25×$10^4$
& 5.26×$10^6$
\\
FedRecovery \cite{zhang2023fedrecovery} & 9.01×$10^4$
& 2.37×$10^5$
& 5.87×$10^3$
& 1.02×$10^7$
\\
PGD \cite{halimi2022federated} & 5.92×$10^5$ 
& 7.64×$10^5$ 
& 1.98×$10^4$
& 2.83×$10^7$
\\
Our method & 4.51×$10^3$
& 6.13×$10^4$
& 6.57×$10^4$
& 9.49×$10^5$
\\
\hline
\end{tabular}
\end{table}
Our next experiment involves the comparison of prediction loss between our proposed method and existing solutions. Lower values of prediction loss indicate better performance of the unlearning model. Table \ref{tab:loss} shows that Retrain has the lowest prediction loss, which corresponds to the high accuracy it exhibits in Table \ref{tab:acc}. Meanwhile, FedEraser has quite good performance on the Adult and Purchase datasets, which is very close to Retrain's prediction loss. However, on the MNIST and CIFAR-10 datasets, our proposed FUI scheme is superior. This may be related to the type of dataset, since the input data of Adult and Purchase are numerical, while the MNIST and CIFAR-10 datasets are classical image datasets. The FedRecovery algorithm also performs well on the MNIST and Adult datasets; however, PGD has a high prediction loss on all of them.
\subsubsection{Time Consumption}
\begin{table}[!t]
\renewcommand{\arraystretch}{1.3}
\caption{Comparison of running time among different FU methods.}
\label{tab:time}
\centering
\begin{tabular}{c|cccc}
\hline
\multirow{2}{*}{Method} & \multicolumn{4}{c}{Time Consumption (second)} \\
\cline{2-5}
& Adult & Purchase & MNIST  & CIFAR-10 \\
\hline
Retrain & 268.7
& 129.4
& 482.3
& 1863.6
\\
FedEraser \cite{liu2021federaser} & 92.9
& 57.9
& 136.4
& 819.2
\\
FedRecovery \cite{zhang2023fedrecovery} & 1.3
& 1.6 
& 1.7
& 2.1
\\
PGD \cite{halimi2022federated} & 98.2 
& 81.6
& 121.5 
& 762.0 
\\
Our method & 5.8
& 4.7
& 4.6
& 6.3
\\
\hline
\end{tabular}
\end{table}
The running time is also a critical performance metric for unlearning methods, indicating their efficiency. FUI demonstrates a significant reduction in time consumption across all datasets, which proved to be an efficient unlearning process. Retrain is undoubtedly the most time-consuming approach. The FedEraser algorithm also takes a significant amount of time due to the need for multi-client retraining, although it is already accelerated compared to Retrain. The PGD algorithm also takes more time due to the post-training process. FedRecovery is the fastest unlearning algorithm through its feature of only adding Gaussian noise to achieve unlearning. After comparison, we can see that although FUI is not the fastest unlearning algorithm, its efficiency is highly acceptable.

\subsubsection{Unlearning Performance}
\begin{figure}
  \centering
  \subfigure[Precision]{
    \label{fig:subfig:2a}
    \includegraphics[scale=0.26]{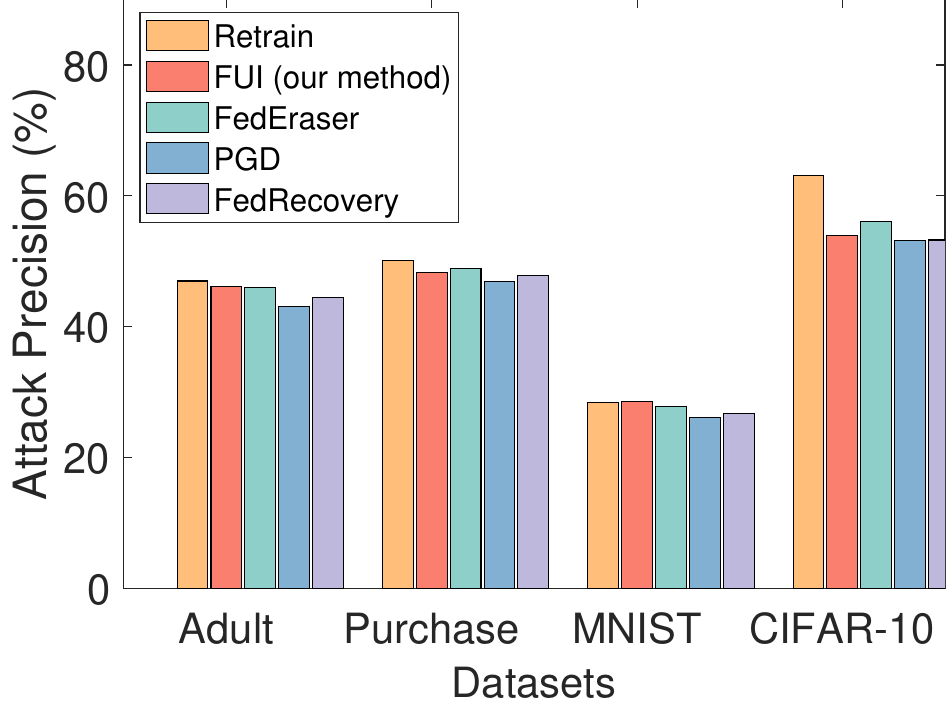}}
  \subfigure[Recall]{
    \label{fig:subfig:2b}
    \includegraphics[scale=0.26]{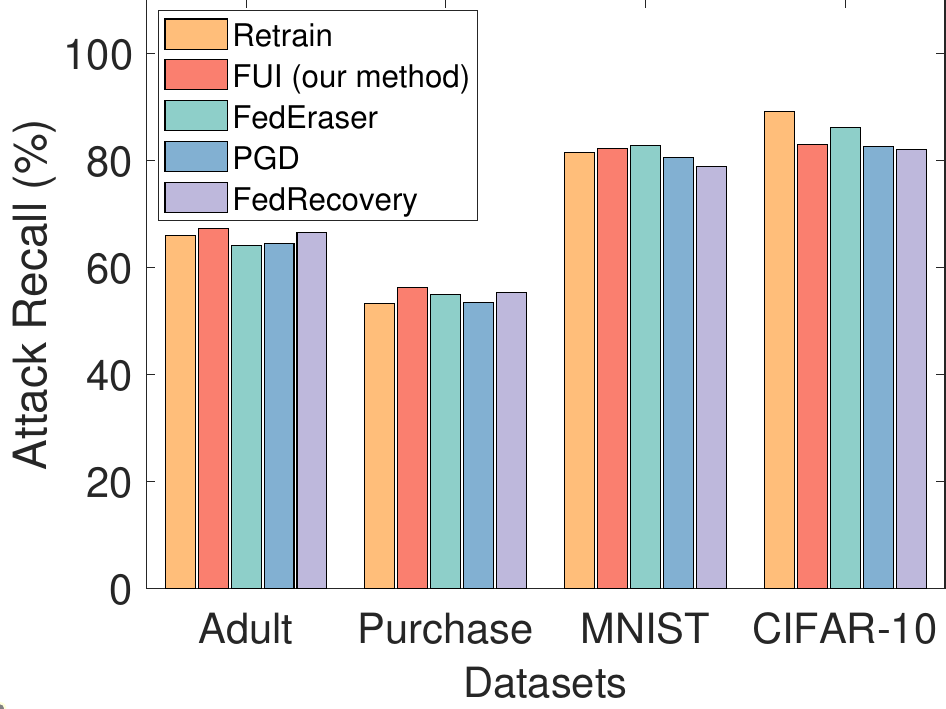}}
  \caption{Comparison of MIA performance among different FU methods.}
  \label{fig:MIA}
\end{figure}
The main purpose of federated unlearning is to eliminate the influence of client data, keeping as little information from the target client as possible in the unlearning model. In this experiment, we evaluate how much information about the target client remains in the unlearning model by performing MIA when the unlearning model is running on different datasets. To perform MIA, we first train shadow models on a dataset similar to the target model’s dataset to simulate its behavior. Then, we use these shadow models to create an attack model to distinguish between data that was part of the training dataset and data that was not. 

Figs. \ref{fig:MIA}(a)(b) respectively show the precision and recall of MIA performed on the unlearning models derived from Retrain and benchmark FU methods, as well as our proposed FUI. 
We could observe that the precision and recall of all five schemes are very close in the first three datasets, i.e., Adult, Purchase, and MNIST. 
In fact, the close performance of unlearning models against MIA indicates the similarity of information retention of the target client in the unlearning models. These results show that, like the retrained model, the unlearning model of FUI contains little information about the target client after effectively unlearning the target client in DPFL. When we compare FUI with three existing FU methods, FedEraser is very comparable to FUI and performs a bit better on CIFAR-10, while FUI exhibits better performance than FedRecovery and PGD most of the time.
In particular, Fig.~\ref{fig:MIA}(a) shows that FUI achieves closer precision to the retrained model than FedRecovery and PGD while maintaining similar precision as FedEraser on all datasets. Fig. \ref{fig:MIA}(b) highlights that FUI outperforms similar to three benchmark solutions. 
\subsubsection{Impact of Privacy Parameters}
\begin{figure}[htbp]
  \centering
  \begin{minipage}[b]{0.32\textwidth}
    \centering
    \includegraphics[width=\textwidth]{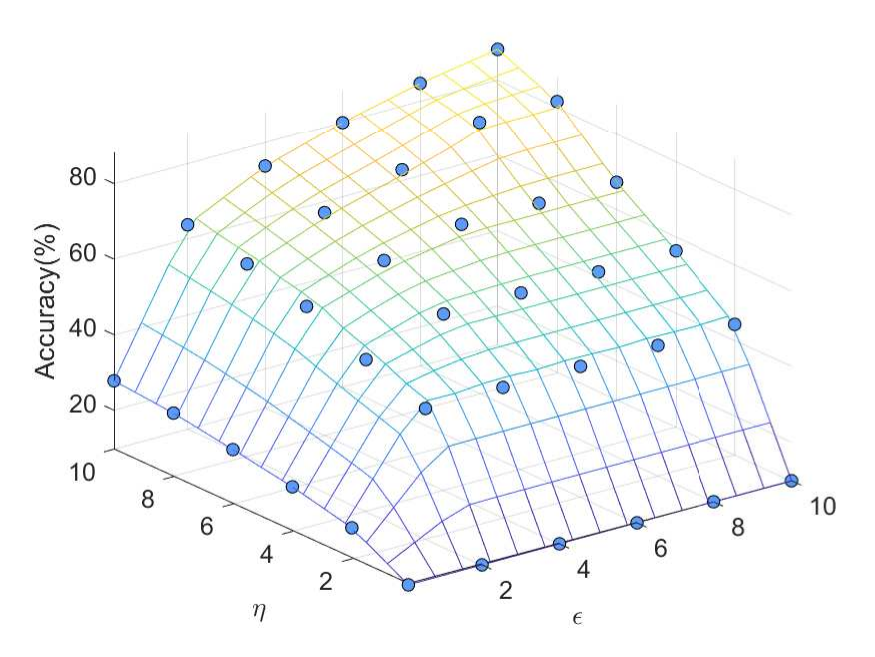}
    \caption*{(a)}
    \label{fig:subfig:3d}
  \end{minipage}
  \begin{minipage}[b]{0.16\textwidth}
    \centering
    \includegraphics[width=\textwidth]{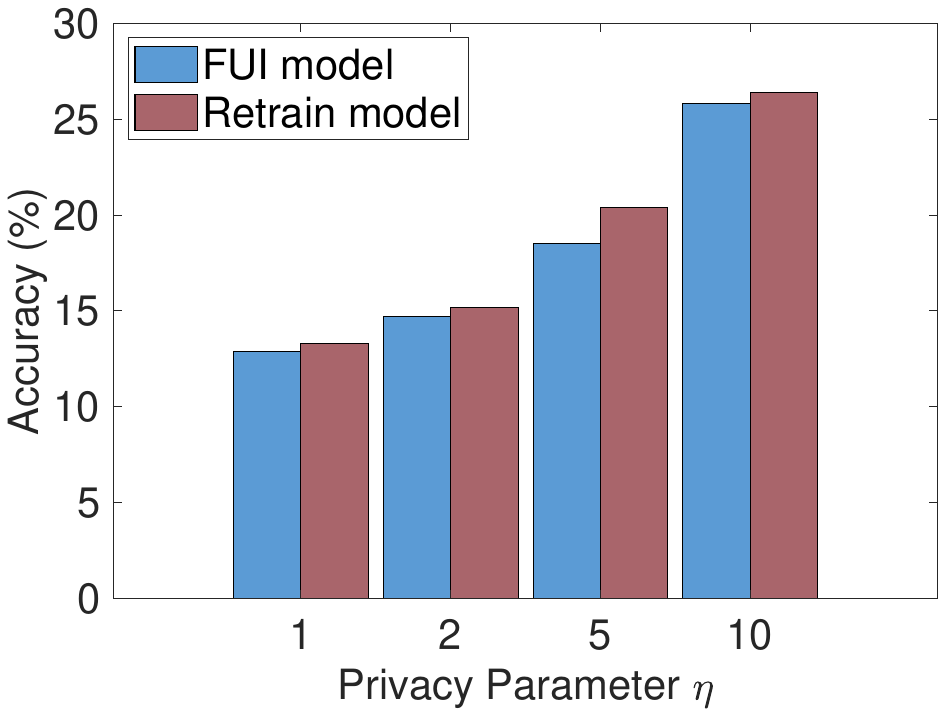}
    \caption*{(b)}
    \label{fig:subfig:3c}
    \vspace{4pt} 
    \includegraphics[width=\textwidth]{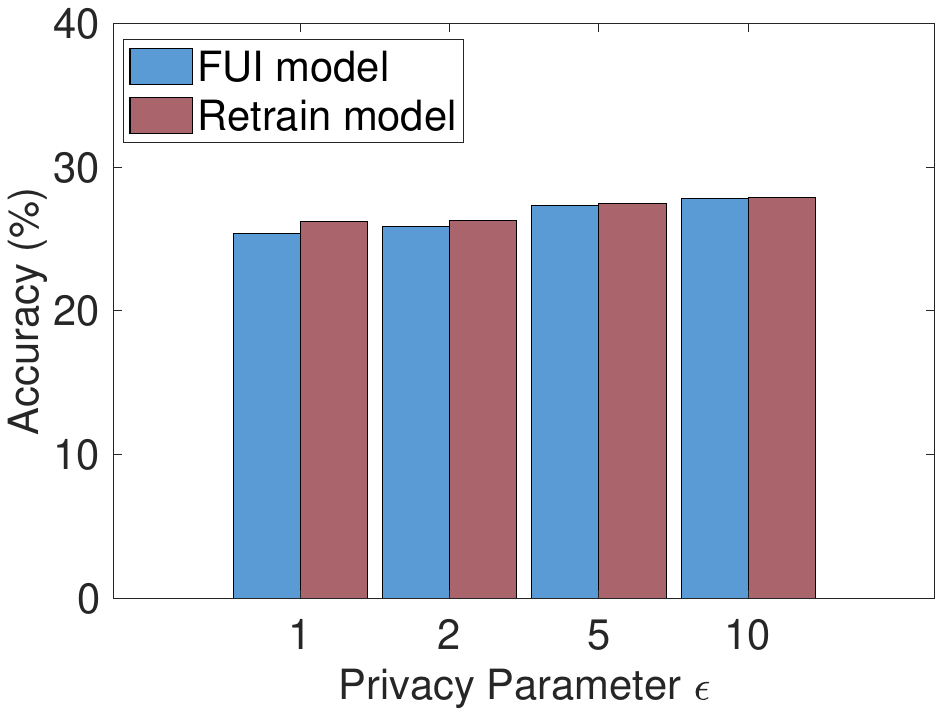}
    \caption*{(c)}
    \label{fig:subfig:3d2}
  \end{minipage}
  \caption{Evolution of accuracy under different privacy parameters.}
  \label{fig:privacy}
\end{figure}

In DPFL scenarios, privacy-preserving parameters are crucial. FUI has two privacy parameters, i.e., $\eta$ and $\epsilon$. Here we investigate the effect of privacy parameters on our proposed FUI through experiments and present the results in Fig. \ref{fig:privacy}.

Fig. \ref{fig:privacy}(a) shows comprehensively how $\eta$ and $\epsilon$ affect the accuracy of the unlearning model running on CIFAR-10. 
Note that changing trends on other datasets are similar, so here we take the result on CIFAR-10 as an example. 
Fig. \ref{fig:privacy}(b) shows the accuracy of the FUI and retrained model corresponding to different $\eta$ given $\epsilon=1$. And Fig. \ref{fig:privacy}(c) shows how their accuracy varies with $\epsilon$ given $\eta=1$.
We can observe that accuracy generally rises with $\eta$ or $\epsilon$, and the increase rate slows down gradually. This shows the typical effect of privacy parameters on model accuracy in DPFL. It is worth noting that when $\eta$ is small, accuracy no longer changes after $\epsilon$ reaches a certain value. This is because a smaller $\eta$ implies a higher level of DP protection, and the indistinguishability the unlearning model achieves via local model retraction is sufficient to meet the requirement of unlearning corresponding to $\epsilon$, and thus increasing $\epsilon$ will not further impact the accuracy. In addition, we can find that the unlearning model accuracy of FUI and the retrained model have similar performance. This further validates the effectiveness of our scheme.

\subsubsection{Evaluation of Stackelberg Game}
To verify the effectiveness of the derived optimal unlearning strategies for FUI, we conduct a series of simulation experiments. We explore the impacts of strategy pairs on the utilities of the server and target client changing with different parameters. For simplicity, we denote the server's optimal strategy as SO and the server's random strategy as SR; similarly, CO and CR represent the client's optimal strategy and random strategy, respectively.

Fig. \ref{fig:FUI_s} illustrates the impact of various parameters on the server's utility under four strategy combinations: SO vs. CO (blue solid line), SO vs. SR (blue dashed line), SR vs. CO (red solid line), and SR vs. CR (red dashed line). The results demonstrate that SO always yields higher utility compared to SR across different parameter settings, confirming the effectiveness of the optimal strategy in maximizing the server's utility. Moreover, when the server adopts the optimal strategy, $|D_{-i}|$ and $a$ are positively correlated with the server's utility since they control the loss of performance for the unlearning model. Particularly, $a$ has the most significant proportional effect on its utility. Parameter $b$, on the other hand, is negatively correlated with the server's utility. While parameters $r,s,$ and $p_{max}$ have almost no effect on the server's utility.

Fig. \ref{fig:FUI_c} presents the effect of varying parameters on the client's utility under the same four strategy combinations as in Fig.~\ref{fig:FUI_s}. The results indicate that CO always produces higher utility than CR across different situations. It is worth noting that the client's utility is always greater than 0 in all cases because the client would only request unlearning when its utility is not negative. $r$ is proportional to the client's utility under all combinations of strategies. And $s$ is negatively correlated with the client's utility since its increase would decrease the benefit of unlearning. Parameters $|D_{-i}|,a,b,$ and $p_{max}$ have no obvious effect on the client's utility.

\begin{figure}
  \centering
  \subfigure{
    \label{fig:subfig:4a}
    \includegraphics[scale=0.17]{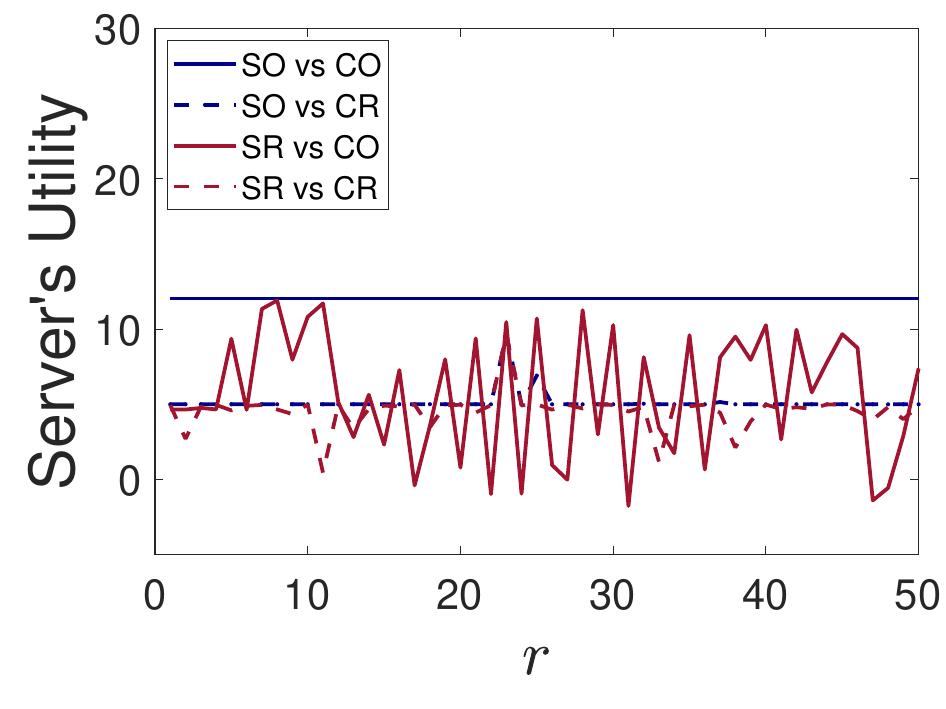}}
  \subfigure{
    \label{fig:subfig:4b}
    \includegraphics[scale=0.17]{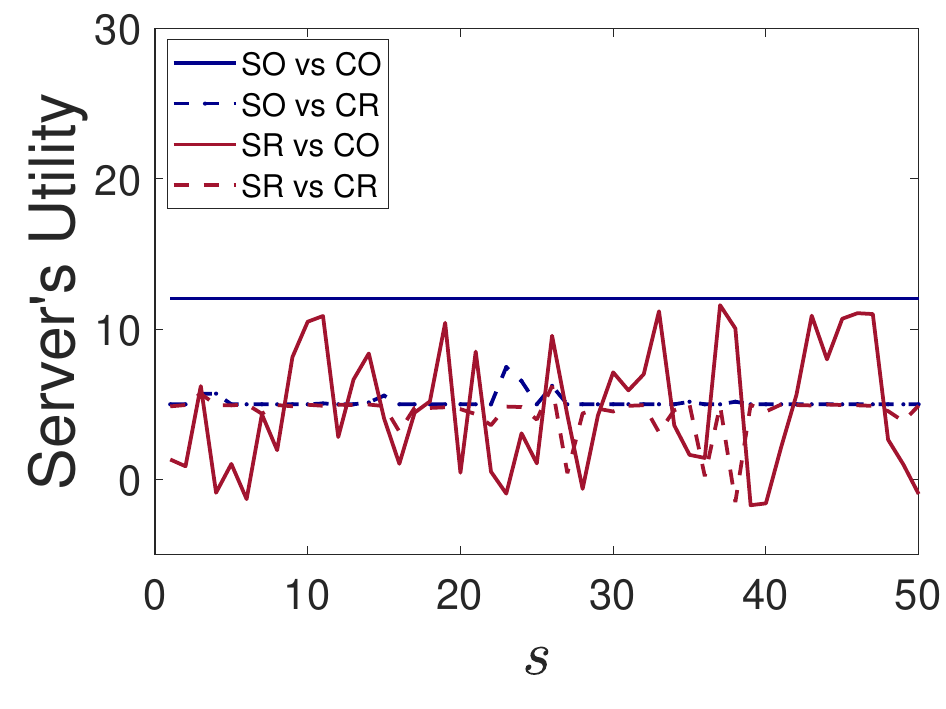}}
  \subfigure{
    \label{fig:subfig:4c}
    \includegraphics[scale=0.17]{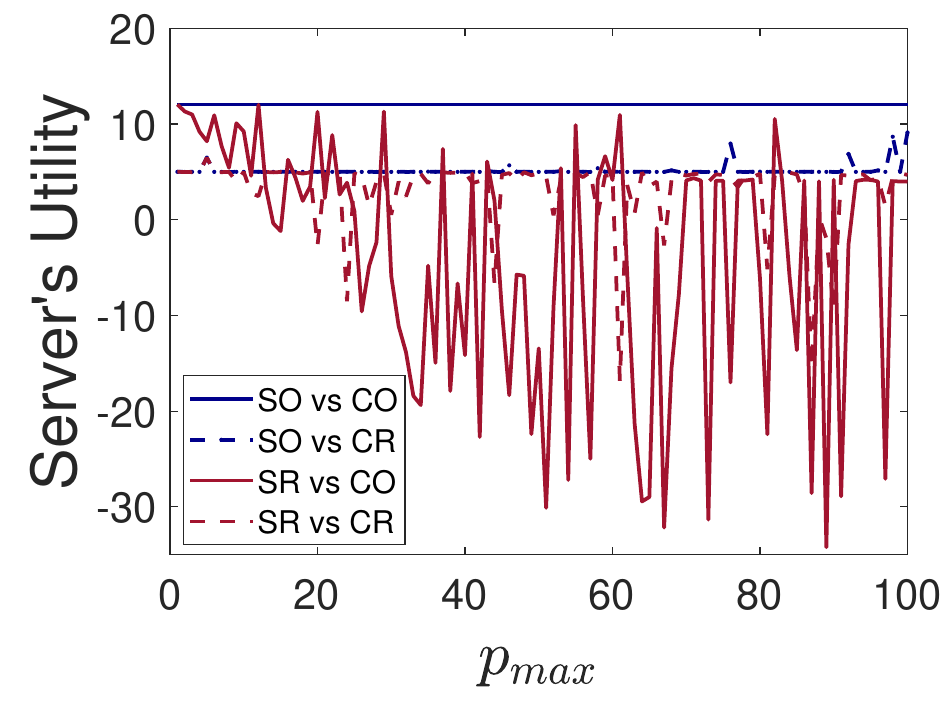}}
  \subfigure{
    \label{fig:subfig:4d}
    \includegraphics[scale=0.17]{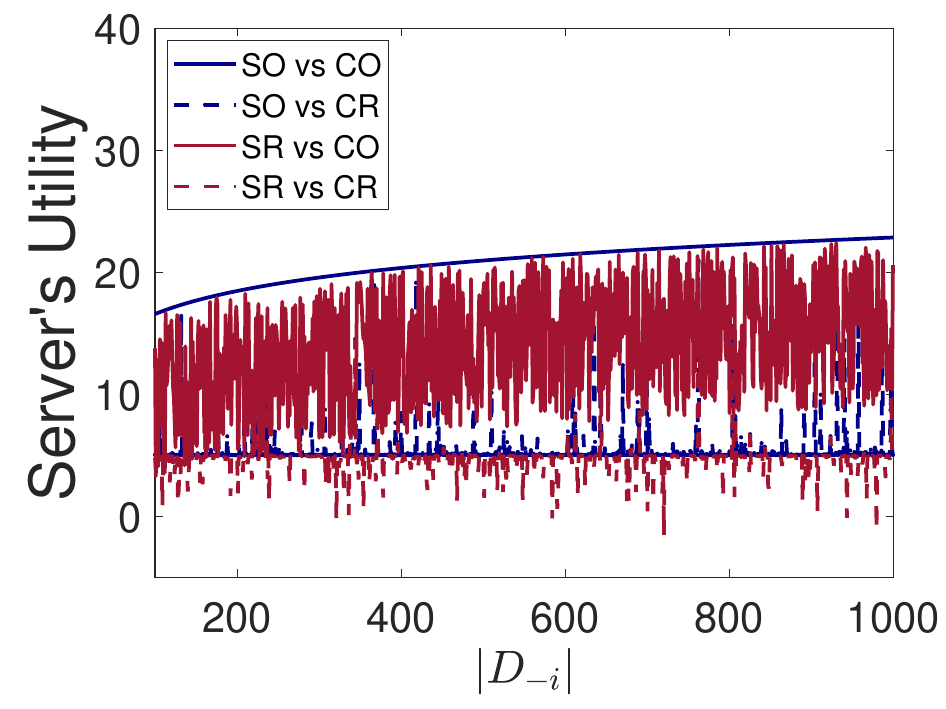}}
  \subfigure{
    \label{fig:subfig:4e}
    \includegraphics[scale=0.17]{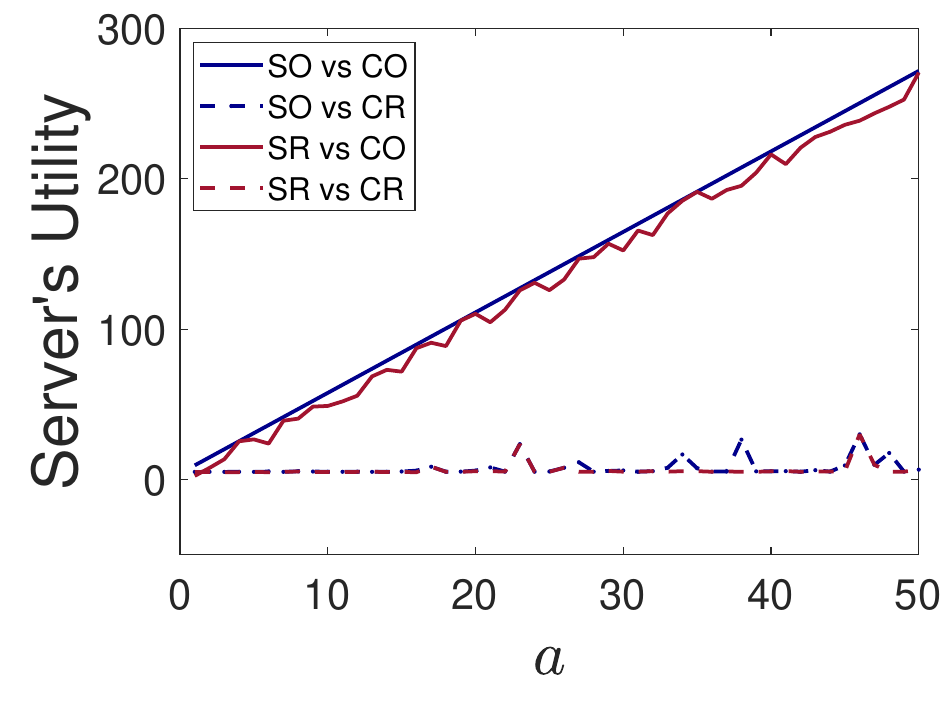}}
  \subfigure{
    \label{fig:subfig:4f}
    \includegraphics[scale=0.17]{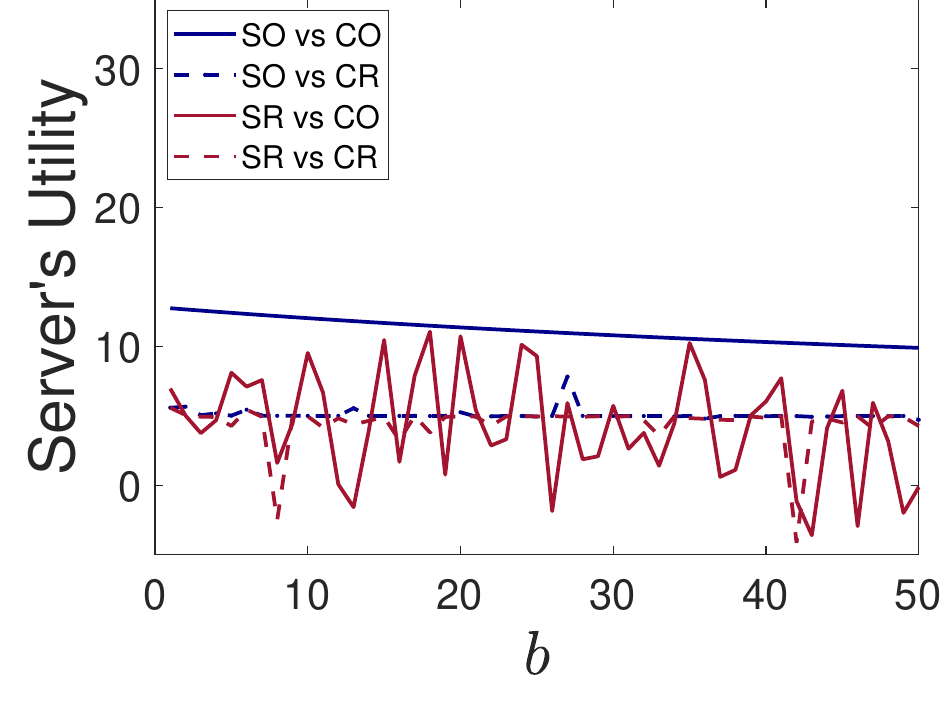}}
  \caption{Impact of strategies on server's utility with different parameters.}
  \label{fig:FUI_s}
\end{figure}

\begin{figure}
  \centering
  \subfigure{
    \label{fig:subfig:5a}
    \includegraphics[scale=0.17]{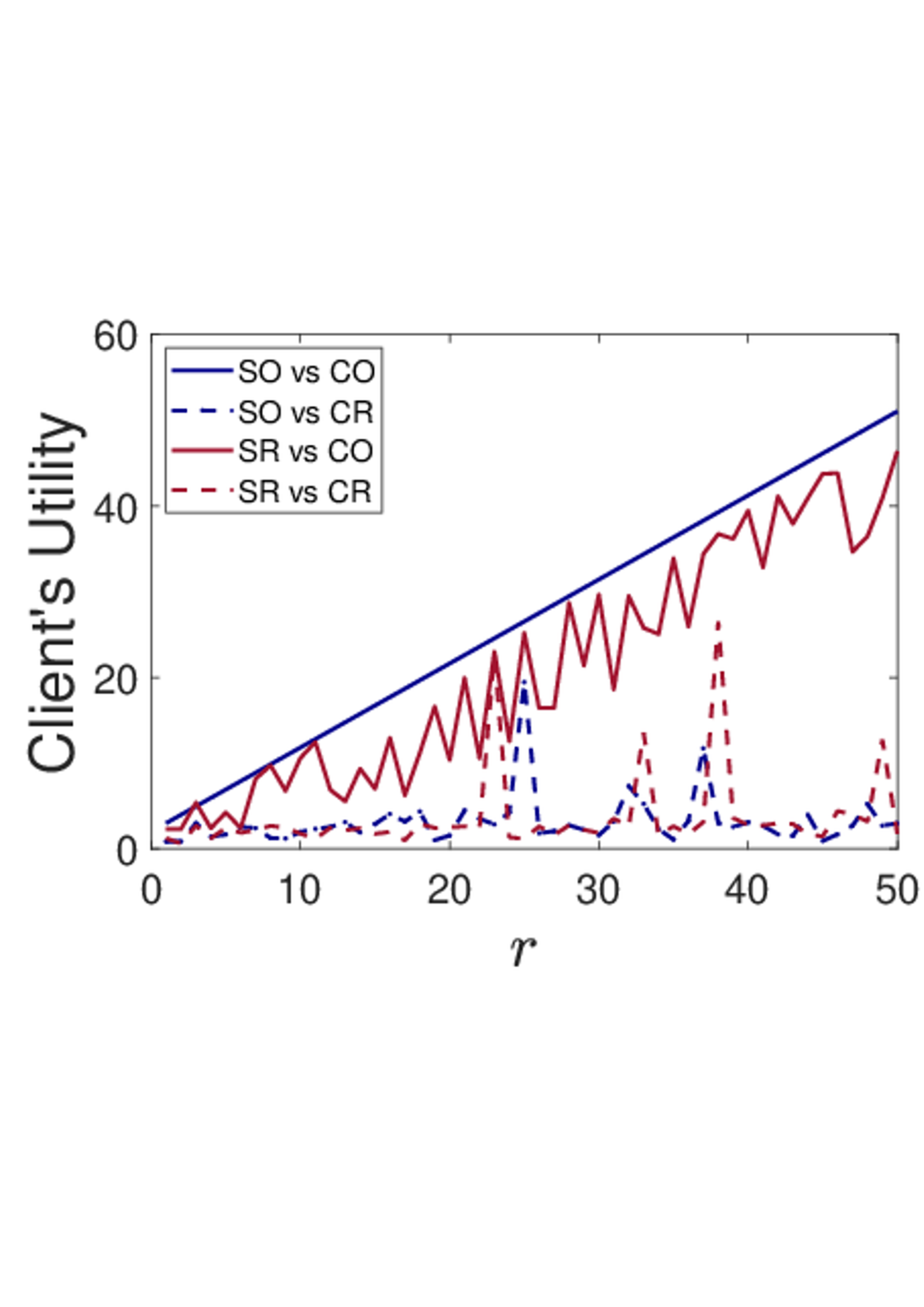}}
  \subfigure{
    \label{fig:subfig:5b}
    \includegraphics[scale=0.17]{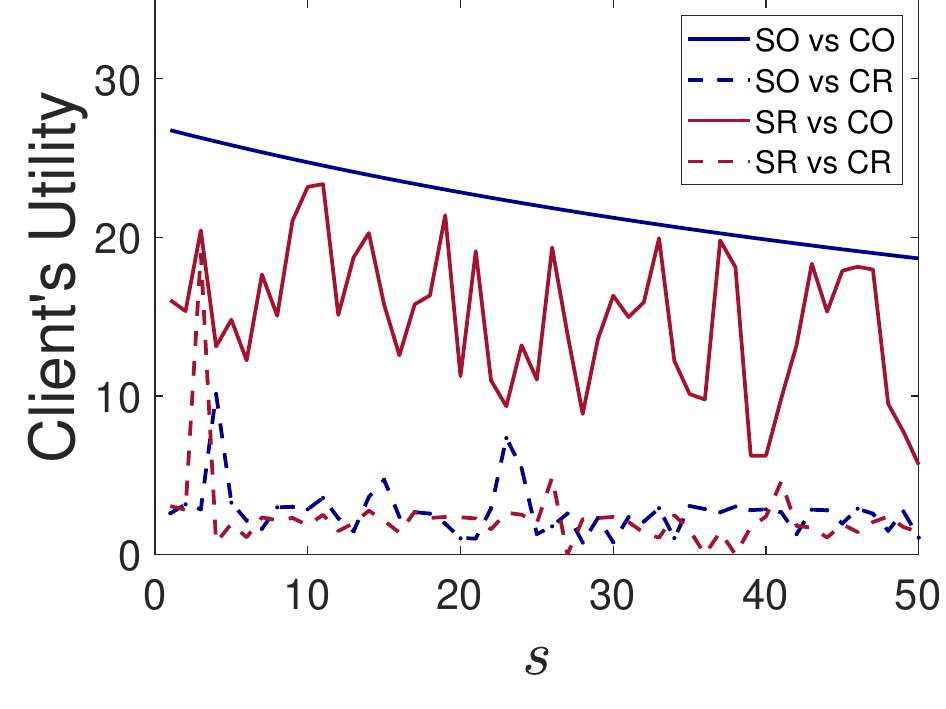}}
  \subfigure{
    \label{fig:subfig:5c}
    \includegraphics[scale=0.17]{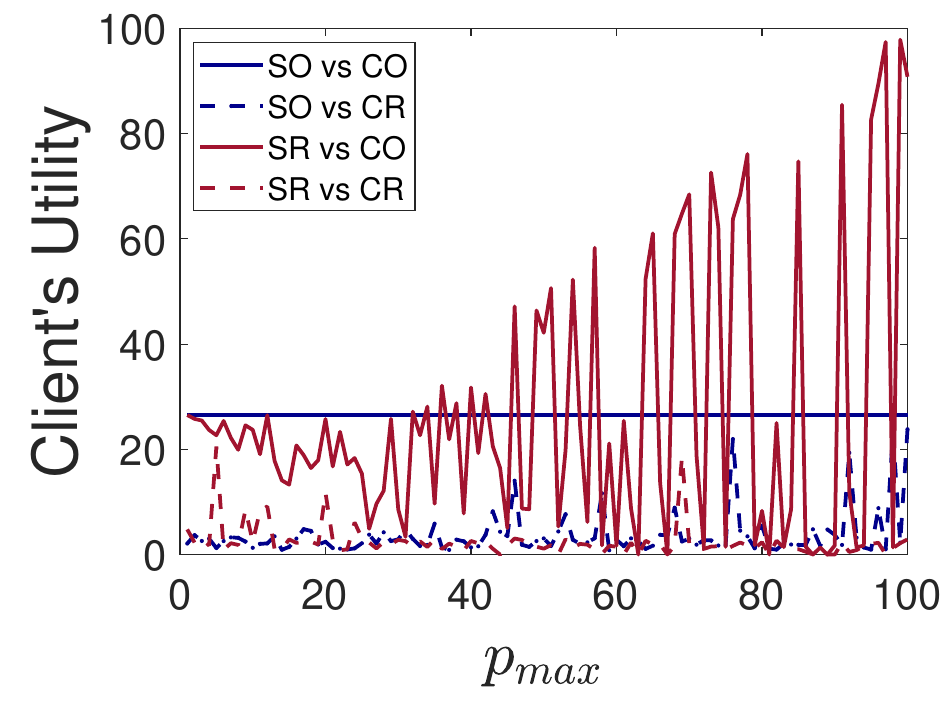}}
  \subfigure{
    \label{fig:subfig:5d}
    \includegraphics[scale=0.17]{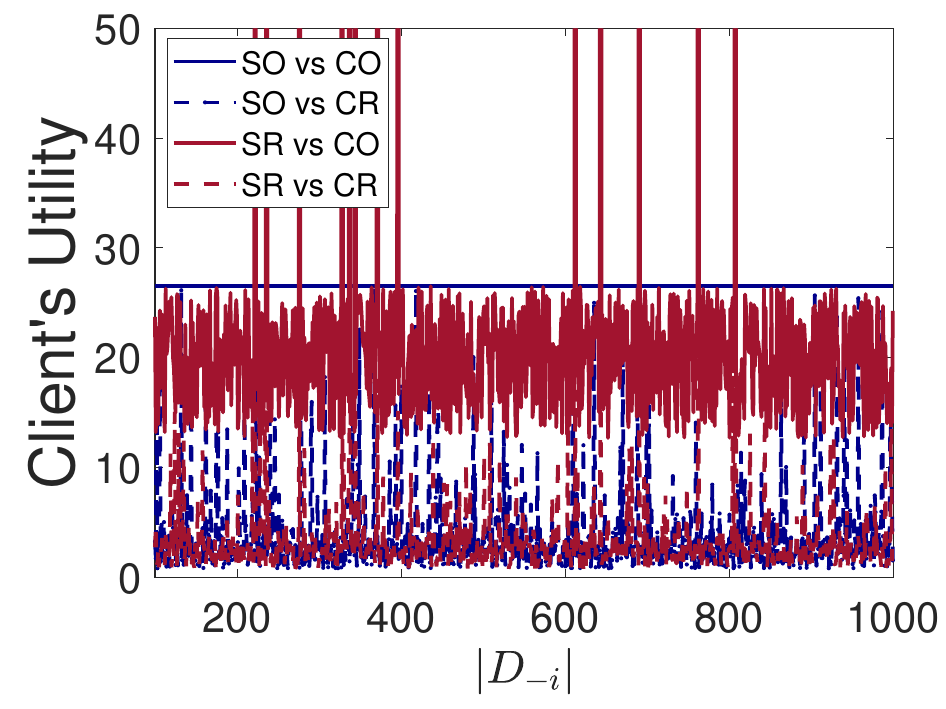}}
  \subfigure{
    \label{fig:subfig:5e}
    \includegraphics[scale=0.17]{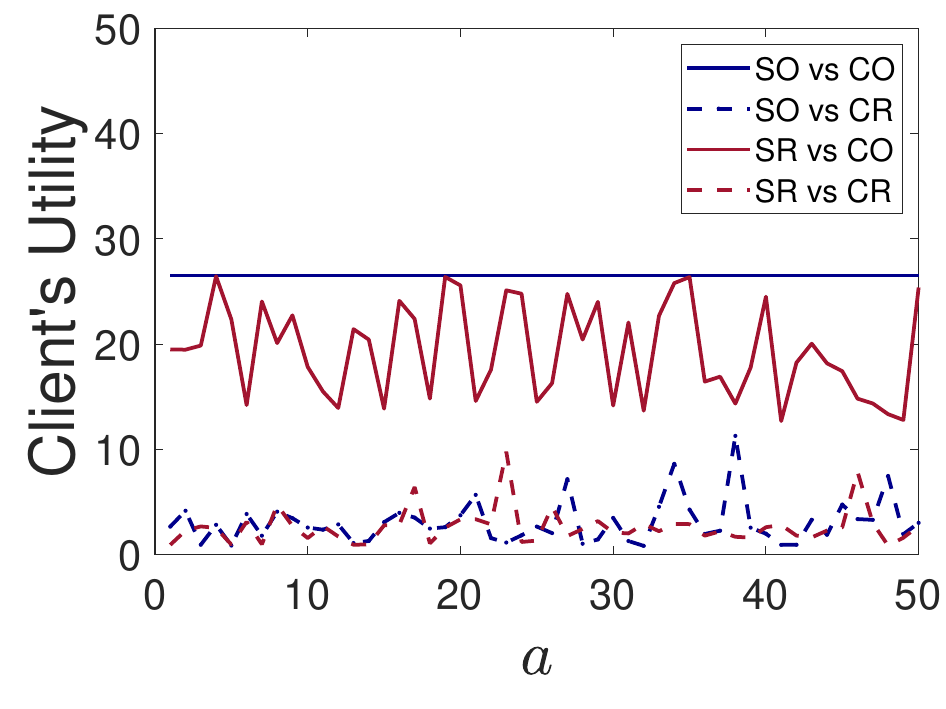}}
  \subfigure{
    \label{fig:subfig:5f}
    \includegraphics[scale=0.17]{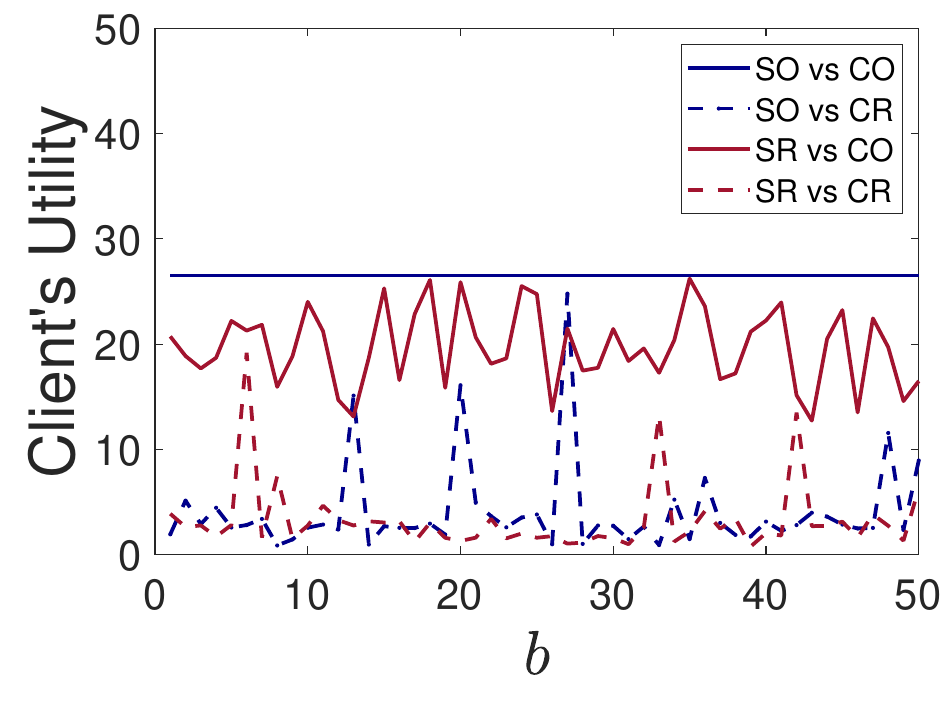}}
  \caption{Impact of strategies on client's utility with different parameters.}
  \label{fig:FUI_c}
\end{figure}

\section{Conclusion and Future Work}\label{sec:conclusion}
In this paper, we propose FUI to fill the research gap of missing unlearning schemes for DPFL. Our FUI scheme consists of two steps: local model retraction and global noise calibration.
In local model retraction, we recycle the noise of the DP mechanism to achieve a certain level of indistinguishability while performing local model retraction. In global noise calibration, we calculate the potential noise gap and then enhance the effect of unlearning by adding noise if necessary. To facilitate the implementation of FUI, we calculate the optimal unlearning strategies for the server and the target client by formulating a two-stage Stackelberg game and resolve two optimization questions sequentially. 
We also conduct privacy analysis and convergence analysis to provide theoretical guarantees. Experiments based on real-world data show that FUI has higher efficiency and better performance than other existing schemes in DPFL; simulation results verify that the optimal strategies can lead to the best utilities for both parties. 
For future work, we will 
focus on simultaneously unlearning multiple local datasets for target clients in DPFL. In addition, we will explore other methods for verifying the unlearning effectiveness tailored for DPFL, especially considering that the noise injected into local models can impact the verification.
\newpage
\bibliographystyle{IEEEtran}
\bibliography{./reference.bib,./IEEEexample}

\end{document}